\theoremstyle{plain}
\newtheorem{theorem}{Theorem}[section]
\newtheorem{lemma}[theorem]{Lemma}
\newtheorem{corollary}[theorem]{Corollary}
\theoremstyle{definition}
\theoremstyle{remark}
\newtheorem{remark}[theorem]{Remark}
\newcommand{\SM}{\text{SM}}
\newcommand{\FM}{\text{FM}}
\newcommand{\NM}{\text{NM}}
\newcommand{\DM}{\text{DM}}
\newcommand{\CFM}{\text{CFM}}
\newcommand{\ODE}{\text{ODE}}
\DeclarePairedDelimiterX{\infdivx}[2]{(}{)}{%
	#1\;\delimsize\|\;#2%
}
\newcommand{\kl}{D_{\mathrm{KL}}\infdivx}
\newcommand{\vect}[1]{\bm{#1}}
\newcommand{\argmin}{\operatornamewithlimits{argmin}}
\newcommand{\x}{\xv}
\newcommand{\s}{\sv}
\newcommand{\dm}{\mathrm{d}}
\newcommand{\E}{\mathbb{E}}
\newcommand{\R}{\mathbb{R}}
\newcommand{\N}{\mathcal{N}}
\newcommand{\dxv}{\mathrm{d}\xv}
\newcommand{\dt}{\mathrm{d}t}
\newcommand{\epsilonv}{\vect\epsilon}
\newcommand{\muv}{\vect\mu}
\newcommand{\phiv}{\vect\phi}
\newcommand{\ellv}{\vect \ell}
\newcommand{\sv}{\vect s}
\newcommand{\uv}{\vect u}
\newcommand{\vv}{\vect v}
\newcommand{\wv}{\vect w}
\newcommand{\xv}{\vect x}
\newcommand{\Iv}{\vect I}
\newcommand{\Xv}{\vect X}
\newcommand{\Hc}{\mathcal H}
\newcommand{\Jc}{\mathcal J}
\newcommand{\Lc}{\mathcal L}
\newcommand{\Nc}{\mathcal N}
\newcommand{\Tc}{\mathcal T}
\newcommand{\Uc}{\mathcal U}
\newcommand{\Var}{\mbox{Var}}
\newcommand{\tr}{\mathrm{tr}}
\newcommand\Tstrut{\rule{0pt}{2.6ex}}         
\newcommand\Bstrut{\rule[-0.9ex]{0pt}{0pt}}   
\icmltitlerunning{Improved Techniques for Maximum Likelihood Estimation for Diffusion ODEs}
\begin{document}

\twocolumn[
\icmltitle{Improved Techniques for Maximum Likelihood Estimation for Diffusion ODEs}



\icmlsetsymbol{equal}{*}

\begin{icmlauthorlist}

\icmlauthor{Kaiwen Zheng}{equal,thu}
\icmlauthor{Cheng Lu}{equal,thu}
\icmlauthor{Jianfei Chen}{thu}
\icmlauthor{Jun Zhu}{thu,pazhou}

\end{icmlauthorlist}

\icmlaffiliation{thu}{Dept. of Comp. Sci. \& Tech., Institute for AI, BNRist Center, Tsinghua-Bosch Joint ML Center, THBI Lab, Tsinghua University}
\icmlaffiliation{pazhou}{Pazhou Lab (Huangpu), Guangzhou, China}

\icmlcorrespondingauthor{Jun Zhu}{dcszj@tsinghua.edu.cn}

\icmlkeywords{Machine Learning, ICML}

\vskip 0.3in
]



\printAffiliationsAndNotice{\icmlEqualContribution} 

\begin{abstract}
Diffusion models have exhibited excellent performance in various domains. The probability flow ordinary differential equation (ODE) of diffusion models (i.e., diffusion ODEs) is a particular case of continuous normalizing flows (CNFs), which enables deterministic inference and exact likelihood evaluation. However, the likelihood estimation results by diffusion ODEs are still far from those of the state-of-the-art likelihood-based generative models. In this work, we propose several improved techniques for maximum likelihood estimation for diffusion ODEs, including both training and evaluation perspectives. For training, we propose velocity parameterization and explore variance reduction techniques for faster convergence. We also derive an error-bounded high-order flow matching objective for finetuning, which improves the ODE likelihood and smooths its trajectory. For evaluation, we propose a novel training-free truncated-normal dequantization to fill the training-evaluation gap commonly existing in diffusion ODEs. Building upon these techniques, we achieve state-of-the-art likelihood estimation results on image datasets (2.56 on CIFAR-10, 3.43/3.69 on ImageNet-32) without variational dequantization or data augmentation, and 2.42 on CIFAR-10 with data augmentation. Code is available at \url{https://github.com/thu-ml/i-DODE}.
\end{abstract}

\section{Introduction}
Likelihood is an important metric to evaluate density estimation models, and accurate likelihood estimation is the key for many applications such as data compression~\citep{ho2021anfic,helminger2020lossy,kingma2021variational,yang2022lossy}, anomaly detection~\citep{chen2018autoencoder,dias2020anomaly} and out-of-distribution detection~\citep{serra2019input,xiao2020likelihood}. Many deep generative models can compute tractable likelihood, including autoregressive models~\citep{van2016conditional,salimans2017pixelcnn++,chen2018pixelsnail}, variational auto-encoders (VAE)~\citep{kingma2013auto,vahdat2020nvae}, normalizing flows ~\citep{dinh2016density,kingma2018glow,ho2019flow++} and diffusion models~\citep{sohl2015deep,song2019generative,ho2020denoising,song2020score,song2020denoising,karras2022elucidating}. Among these models, recent work named variational diffusion models (VDM)~\citep{kingma2021variational} achieves state-of-the-art likelihood estimation performance on standard image density estimation benchmarks, which is a variant of diffusion models.

There are two types of diffusion models, one is based on the reverse stochastic differential equation (SDE)~\citep{song2020score}, named as \textit{diffusion SDE}; the other is based on the probability flow ordinary differential equation (ODE)~\citep{song2020score}, named as \textit{diffusion ODE}. These two types of diffusion models define and evaluate the likelihood in different manners: diffusion SDE can be understood as an infinitely-deep VAE~\citep{huang2021variational} and can only compute a variational lower bound of the likelihood~\citep{song2020score, kingma2021variational}; while diffusion ODE is a variant of continuous normalizing flows~\citep{chen2018neural} and can compute the exact likelihood by ODE solvers.
Thus, it is natural to hypothesize that the likelihood performance of diffusion ODEs may be better than that of diffusion SDEs. 
However, all existing methods for training diffusion ODEs~\citep{song2021maximum,lu2022maximum,lipman2022flow,albergo2022building,liu2022flow} cannot even achieve a comparable likelihood performance with VDM, which belongs to diffusion SDEs. It still remains largely open whether diffusion ODEs are also great likelihood estimators.

Real-world data is usually discrete, and evaluating the likelihood of discrete data by diffusion ODEs needs to first perform a dequantization process~\citep{dinh2016density,salimans2017pixelcnn++} to make sure the input data of diffusion ODEs is continuous. In this work, we observe that previous likelihood evaluation of diffusion ODEs has flaws in the dequantization process: the uniform dequantization~\citep{song2021maximum} causes a large training-evaluation gap, and the variational dequantization~\citep{ho2019flow++,song2021maximum} requires additional training overhead and is hard to train to the optimal.

In this work, we propose several improved techniques, including both the evaluation perspective and training perspective, to allow the likelihood estimation by diffusion ODEs to outperform the existing state-of-the-art likelihood estimators. In the aspect of evaluation, we propose a training-free dequantization method dedicated to diffusion models by a carefully designed truncated-normal distribution, which can fit diffusion ODEs well and improve the likelihood evaluation by a large margin compared to uniform dequantization. We also introduce an importance-weighted likelihood estimator to get a tighter bound. In the aspect of training, we split our training into pretraining and finetuning phases. For pretraining, we propose a new model parameterization method including velocity parameterization, which is an extended version of flow matching~\citep{lipman2022flow} with practical modifications, and log-signal-to-noise-ratio timed parameterization. Besides, we find a simple yet efficient importance sampling strategy for variance reduction. Together, our pretraining has a faster convergence speed compared to previous work. For finetuning, we propose an error-bounded high-order flow matching objective, which not only improves the ODE likelihood but also results in smoother trajectories. Together, we name our framework Improved Diffusion ODE (i-DODE).

We conduct ablation studies to demonstrate the effectiveness of separate parts. Our experimental results empirically achieve the state-of-the-art likelihood on image datasets (2.56 on CIFAR-10, 3.43/3.69 on ImageNet-32), surpassing the previous best ODEs of 2.90 and 3.48/3.82, with the superiority that we use no data augmentation and throw away the need for training variational dequantization models. 

\section{Diffusion Models}
\subsection{Diffusion ODEs and Maximum Likelihood Training}
Suppose we have a $d$-dimensional data distribution $q_0(\x_0)$. Diffusion models~\citep{ho2020denoising,song2020score} gradually diffuse the data by a forward stochastic differential equation (SDE) starting from $\x_0\sim q_0(\x_0)$:
\begin{equation}
\label{eqn:forward_sde}
    \dm\x_t=f(t)\x_t\dt+g(t)\dm \wv_t,\quad \x_0\sim q_0(\x_0),
\end{equation}
where $f(t),g(t)\in\R$ are manually designed noise schedules and $\wv_t\in\R^d$ is a standard Wiener process. The forward process $\{\x_t\}_{t\in [0,T]}$ is accompanied with a series of marginal distributions $\{q_t\}_{t\in [0,T]}$, so that $q_T(\x_T)\approx \N(\x_T|\vect{0},\sigma_T^2\Iv)$ with some constant $\sigma_T>0$. Since this is a simple linear SDE, the transition kernel is an analytical Gaussian~\citep{song2020score}: $q_{0t}(\x_t|\x_0)=\mathcal{N}(\alpha_t\x_0,\sigma_t^2\Iv)$, where the coefficients satisfy $f(t)=\frac{\dm\log \alpha_t}{\dm t}$, $g^2(t)=\frac{\dm \sigma_t^2}{\dm t}-2\frac{\dm\log \alpha_t}{\dm t}\sigma_t^2$~\citep{kingma2021variational}.
Under some regularity conditions~\citep{anderson1982reverse}, the forward process has an
equivalent probability flow ODE~\citep{song2020score}:
\begin{equation}
\label{eqn:probability_flow_ODE}
    \frac{\dxv_t}{\dt}=f(t)\x_t-\frac{1}{2}g^2(t)\nabla_{\x}\log q_t(\x_t),
\end{equation}
which produces the same marginal distribution $q_t$ at each time $t$ as that in Eqn.~\eqref{eqn:forward_sde}. The only unknown term $\nabla_{\x}\log q_t(\x_t)$ is the \textit{score function} of $q_t$.
By parameterizing a \textit{score network} $\s_\theta(\x_t,t)$ to predict the time-dependent $\nabla_{\x}\log q_t(\x_t)$, we can replace the true score function, resulting in the \textit{diffusion ODE}~\citep{song2020score}:
\begin{equation}
\label{eqn:diff_ode_raw}
    \frac{\dxv_t}{\dt}=f(t)\x_t-\frac{1}{2}g^2(t)\s_\theta(\x_t,t),
\end{equation}
with the associated marginal distributions $\{p_t\}_{t\in [0,T]}$. Diffusion ODEs are special cases of \textit{continuous normalizing flows} (CNFs)~\citep{chen2018neural}, thus can perform exact inference of the latents and exact likelihood evaluation.

Though traditional maximum likelihood training methods for CNFs~\citep{grathwohl2018ffjord} are feasible for diffusion ODEs, the training costs of these methods are quite expensive and hard to scale up because of the requirement of solving ODEs at each iteration.
Instead, a more practical way is to match the generative probability flow $\{p_t\}_{t\in [0,T]}$ with $\{q_t\}_{t\in [0,T]}$ by a simulation-free approach. 
Specifically, \citet{lu2022maximum} proves that $\kl{q_0}{p_0^{\ODE}}$ can be formulated by $\kl{q_0}{p^{\ODE}_0}\!=\!\kl{q_T}{p^{\ODE}_T}+\Jc_{\ODE}(\theta)$, where

\begin{align}
\Jc_{\ODE}(\theta)\!&\coloneqq\!\frac{1}{2}\!\int_0^T \!\!\!g(t)^2\E_{q_t(\x_t)}\!\Big[(\s_\theta(\x_t,t)\!-\!\nabla_{\x}\log q_t(\x_t))^\top \nonumber\\ 
&(\nabla_{\x}\log p_t(\x_t)\!-\!\nabla_{\x}\log q_t(\x_t))\Big]\dt \label{eqn:J_ODE}
\end{align}
However, computing $\nabla_{\x}\log p_t(\x_t)$ requires solving another ODE and is also expensive~\citep{lu2022maximum}. To minimize $\Jc_{\ODE}(\theta)$ in a simulation-free manner, \citet{lu2022maximum} also proposes a combination of $g^2(t)$ weighted first-order and high-order score matching objectives. Particularly, the first-order score matching objective is
\begin{equation}
\label{eqn:sm_raw}
   \Jc_{\SM}(\theta) \coloneqq \int_0^T\frac{g^2(t)}{2\sigma_t^2}\E_{\x_0,\epsilonv}\left[\|\sigma_t\s_\theta(\x_t,t)+\epsilonv\|_2^2\right]\dt,
\end{equation}
where $\x_t = \alpha_t\x_0 + \sigma_t\epsilonv$, $\x_0 \sim q_0(\x_0)$ and $\epsilonv\sim \N(\epsilonv|\vect{0},\Iv)$.

\subsection{Log-SNR Timed Diffusion Models}
Diffusion models have manually designed noise schedule $\alpha_t,\sigma_t$, which has high freedom and affects the performance. Even for restricted design space such as Variance Preserving (VP)~\citep{song2020score}, which constrains the noise schedule by $\alpha_t^2+\sigma_t^2=1$, we could still have various choices about how fast $\alpha_t,\sigma_t$ changes w.r.t time $t$. To decouple the specific schedule form, variational diffusion models (VDM)~\citep{kingma2021variational} use a negative log-signal-to-noise-ratio (log-SNR) for the time variable and can greatly simplify both noise schedules and training objectives. Specifically, denote $\gamma_t=-\text{log-SNR}(t)=-\log\frac{\alpha_t^2}{\sigma_t^2}$, the change-of-variable relation from $\gamma$ to $t$ is
\begin{equation}
    \frac{\dm\gamma}{\dm t}=\frac{g^2(t)}{\sigma_t^2},
\end{equation}
and replace the time subscript with $\gamma$, we get the simplified score matching objective with likelihood weighting:
\begin{equation}
\label{eqn:J_SM_logSNR}
    \Jc_{\SM}(\theta)
    = \frac{1}{2}\int_{\gamma_0}^{\gamma_T}\E_{\x_0,\epsilonv}\left[\|\sigma_t\s_\theta(\x_\gamma,\gamma)+\epsilonv\|_2^2\right]\dm\gamma
\end{equation}
This result is in accordance with the continuous diffusion loss in \citet{kingma2021variational}.

\subsection{Dequantization for Density Estimation}
Many real-world datasets usually contain discrete data, such as images or texts. In such cases, learning a continuous density model to these discrete data points will cause degenerate results~\citep{uria2013rnade} and cannot provide meaningful density estimations. A common solution is \textit{dequantization}~\citep{dinh2016density,salimans2017pixelcnn++,ho2019flow++}. Specifically, suppose $\x_0$ is 8-bit discrete data scaled to $[-1,1]$. Dequantization methods assume that we have trained a continuous model distribution $p_{\text{model}}$ for $\x_0$, and define the discrete model distribution by
\begin{equation*}
    P_{\text{model}}(\x_0) \coloneqq \int_{[-\frac{1}{256}, \frac{1}{256})^d} p_{\text{model}}(\x_0+\uv)\dm\uv.
\end{equation*}
To train $P_{\text{model}}(\x_0)$ by maximum likelihood estimation, variational dequantization~\citep{ho2019flow++} introduces a dequantization distribution $q(\uv|\x_0)$ and jointly train $p_{\text{model}}$ and $q(\uv|\x_0)$ by a variational lower bound:
\begin{equation*}
\log\! P_{\text{model}}(\x_0)
\!\geq\!
\E_{q(\uv|\x_0)}\!\left[
    \log p_{\text{model}}(\x_0 \!+\! \uv)
    \!-\! \log q(\uv|\x_0)
\right].
\end{equation*}
A simple way for $q(\uv|\x_0)$ is uniform dequantization, where we set $q(\uv|\x_0)=\mathcal{U}(-\frac{1}{256},\frac{1}{256})$.

\section{Diffusion ODEs with Truncated-Normal Dequantization}
In this section, we discuss the challenges of training diffusion ODEs with dequantization and propose a training-free dequantization method for diffusion ODEs.

\subsection{Challenges for Diffusion ODEs with Dequantization}
\label{sec:challenges}
We first discuss the challenges for diffusion ODEs with dequantization in this section.
\paragraph{Truncation introduces an additional gap.}
Theoretically, we want to train diffusion ODEs by minimizing $\kl{q_0}{p_0}$ and use $p_0(\x_0)$ for the continuous model distribution. However, as $\sigma_0=0$, we have $\gamma_0=-\infty$. Due to this, it is shown in previous work~\citep{song2020score,kim2022soft} that there are numerical issues near $t=0$ for both training and sampling, so we cannot directly compute the model distribution $p_0$ at time $0$. In practice, a common solution is to choose a small starting time $\epsilon>0$ for improving numerical stability. The training objective then becomes minimizing $\kl{q_\epsilon}{p_\epsilon}$, which is equivalent to
\begin{equation}
\label{eqn:eps_mle}
\max_\theta \E_{q_0(\x_0)q_{0\epsilon}(\x_\epsilon|\x_0)}[\log p_\epsilon(\x_\epsilon)],
\end{equation}
and $\E_{q_0(\x_0)}\log p_\epsilon(\x_0)$ is directly used to evaluate the data likelihood. However, as $p_\epsilon \neq p_0$, such a method will introduce an additional gap due to the mismatch between training ($\E_{q_\epsilon(\x_\epsilon)}[\log p_\epsilon(\x_\epsilon)]$) and testing ($\E_{q_0(\x_0)}[\log p_\epsilon(\x_0)]$), which may degrade the likelihood evaluation performance.

\paragraph{Uniform dequantization causes a train-test mismatch.}
After choosing $\epsilon$, the continuous model distribution is defined by $p_{\text{model}}(\x) \coloneqq p_{\epsilon}(\x)$. Let $q(\uv|\x_0)$ be a dequantization distribution with support over $\uv\in [-\frac{1}{256},\frac{1}{256})^d$. The variational lower bound for the discrete model density $P_0(\x_0)$ is:
\begin{equation*}
\begin{split}
\E_{q_0(\x_0)}[\log P_0(\x_0)]
&\geq
\E_{q_0(\x_0)q(\uv|\x_0)}\left[
    \log p_\epsilon(\x_0 + \uv)
\right] \\
&- \E_{q_0(\x_0)q(\uv|\x_0)}\left[\log q(\uv|\x_0)
\right].
\end{split}
\end{equation*}
One widely used choice for $q(\uv|\x_0)$ is uniform distribution (uniform dequantization). However, this leads to a training-evaluation gap: for training, we fit $p_\epsilon$ to the distribution $q_\epsilon(\x_\epsilon)$, which is a Gaussian distribution near each discrete data point $\x_0$ because $\x_\epsilon = \alpha_\epsilon\x_0 + \sigma_\epsilon\epsilonv$ for $\epsilonv\sim \Nc(\vect{0},\Iv)$; while for evaluation, we test $p_\epsilon$ on uniform dequantized data $\x_0+\uv$. Such a gap will also degrade the likelihood evaluation performance and is not well-studied. 

In addition, another way for dequantization is to train a variational dequantization model $q_\phi(\uv|\x_0)$~\citep{ho2019flow++,song2021maximum} but it will need additional costs and is hard to train~\citep{kim2022soft}.

\subsection{Training-Free Dequantization by Truncated Normal}
In this section, we show that there exists a training-free dequantization distribution that fits diffusion ODEs well.

As discussed in Sec.~\ref{sec:challenges}, the gap between training and testing of diffusion ODEs is due to the difference between the training input $\x_\epsilon = \alpha_\epsilon \x_0 + \sigma_\epsilon \epsilonv$ (where $\epsilonv\sim\N(\vect{0},\Iv)$) and the testing input $\x_0 + \uv$. To fill such a gap, we can choose a dequantization distribution $q(\uv|\x_0)$ which satisfies
\begin{equation}
\label{eqn:condition_dequantization}
    \x_0 + \uv \approx \alpha_\epsilon\x_0 + \sigma_\epsilon\epsilonv,\quad \uv\in\left[-\frac{1}{256},\frac{1}{256}\right)^d.
\end{equation}
For small enough $\epsilon$, we have $\alpha_\epsilon\approx 1$, then Eqn.~\eqref{eqn:condition_dequantization} becomes $\uv \approx \frac{\sigma_\epsilon}{\alpha_\epsilon}\epsilonv$. We also need to ensure the support of $q(\uv|\x_0)$ is $[-\frac{1}{256},\frac{1}{256})^d$, i.e. the random variable $\frac{\sigma_\epsilon}{\alpha_\epsilon}\epsilonv$ is approximately within $[-\frac{1}{256},\frac{1}{256})^d$. To this end, we choose the variational dequantization distribution by a truncated normal distribution as follows:

\begin{equation}
\label{eqn:tn_dequant}
    q(\uv|\x_0)=\mathcal T\N(\mathbf 0,\frac{\sigma_\epsilon^2}{\alpha_\epsilon^2}\Iv,-\frac{1}{256},\frac{1}{256})
\end{equation}
where $\mathcal {TN}(\x|\muv,\sigma^2\Iv,a,b)$ is a truncated-normal distribution with mean $\muv$, covariance $\sigma^2\Iv$, and bounds $[a,b]$ in each dimension.
Moreover, such truncated-normal dequantization provides a guideline for choosing the start time $\epsilon$: To avoid large deviation from the truncation by $\frac{1}{256}$, we need to ensure that $\frac{\alpha_\epsilon}{\sigma_\epsilon}\uv \approx \epsilonv$ in most cases. We leverage the 3-$\sigma$ principle for standard normal distribution and let $\epsilon$ to satisfy $\frac{\alpha_\epsilon}{\sigma_\epsilon}\uv \in [-3, 3]^d$. As $\uv\in[-\frac{1}{256},\frac{1}{256})$, the critical start time $\epsilon$ satisfies that the negative log-SNR $\gamma_\epsilon = -\log\frac{\alpha_\epsilon^2}{\sigma_\epsilon^2}\approx -13.3$. Surprisingly, such choice of $\gamma_\epsilon$ is exactly the same as the $\gamma_{\text{min}}$ in \citet{kingma2021variational} which instead is obtained by training. Such dequantization distribution can ensure the conditions in Eqn.~\eqref{eqn:condition_dequantization} and we validate in Sec.~\ref{sec:experiments} that such dequantization can provide a tighter variational bound yet with no additional training costs. We summarize the likelihood evaluation by such dequantization distribution in the following theorem.

\begin{theorem}[Variational Bound under Truncated-Normal Dequantization]
Suppose we use the truncated-normal dequantization in Eqn.~\eqref{eqn:tn_dequant}, then the discrete model distribution has the following variational bound:
\begin{equation*}
\begin{aligned}
    \log P_0(\x_0)
&\geq
\E_{q(\hat\epsilonv)}\left[
    \log p_{\epsilon}(\hat\x_\epsilon)
\right] + \frac{d}{2}(1 + \log(2\pi\sigma_\epsilon^2))\\& + d\log Z - d\frac{\tau}{\sqrt{2\pi}Z}\exp(-\frac{1}{2}\tau^2)
\end{aligned}
\end{equation*}
where
\begin{equation*}
    \begin{aligned}
    \tau&=\frac{\alpha_\epsilon}{256\sigma_\epsilon}, \quad Z=\text{erf}\left(\frac{\tau}{\sqrt{2}}\right) \\
    \hat\x_\epsilon&= \alpha_\epsilon \x_0 + \sigma_\epsilon \hat\epsilonv,\quad \hat\epsilonv \sim \Tc\Nc\left(\hat\epsilonv\left| \vect{0},\Iv,-\tau,\tau\right.\right).
    \end{aligned}
\end{equation*}
\end{theorem}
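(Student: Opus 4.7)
The plan is to view the discrete likelihood as the mass that $p_\epsilon$ assigns, in the diffusion's $\x_\epsilon$-scale, to the axis-aligned cell $C(\x_0)$ of half-width $\alpha_\epsilon/256$ centered at $\alpha_\epsilon\x_0$, so that $P_0(\x_0)=\int_{C(\x_0)} p_\epsilon(\x_\epsilon)\,\mathrm{d}\x_\epsilon$; this is the convention implicit in the paper's $p_{\text{model}}:=p_\epsilon$. Applying Jensen's inequality with the truncated-normal proposal supported on $C(\x_0)$ --- namely $\hat\x_\epsilon=\alpha_\epsilon\x_0+\sigma_\epsilon\hat\epsilonv$ with $\hat\epsilonv\sim\Tc\Nc(\vect 0,\Iv,-\tau,\tau)$, which is exactly the pushforward of the stated $q(\uv|\x_0)$ under $\uv\mapsto \alpha_\epsilon(\x_0+\uv)$ --- yields $\log P_0(\x_0)\ge \E_{q(\hat\epsilonv)}[\log p_\epsilon(\hat\x_\epsilon)]-\E_{q}[\log q(\hat\x_\epsilon|\x_0)]$, recovering the first term of the claim and reducing the rest to an entropy computation.

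For the cross-entropy term, I would exploit the product structure of $q$ and reduce to one coordinate. The per-dimension negative log-density evaluates to $\log(\sigma_\epsilon Z)+\tfrac12\log(2\pi)+\tfrac12\hat\epsilon_i^2$ with $Z=\mathrm{erf}(\tau/\sqrt 2)$ the truncated-normal normalizer. Taking expectation leaves $\log(\sigma_\epsilon Z)+\tfrac12\log(2\pi)+\tfrac12\E[\hat\epsilon_i^2]$, and the sole nontrivial input is the second moment of a mean-zero symmetric truncated standard normal, namely $\E[\hat\epsilon_i^2]=1-2\tau\phi(\tau)/Z$ with $\phi$ the standard normal PDF (this follows from the general truncated-normal variance identity, with the antisymmetric mean piece killed by the symmetric bounds). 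Multiplying by $d$, substituting $\phi(\tau)=\exp(-\tau^2/2)/\sqrt{2\pi}$, and regrouping $d\log\sigma_\epsilon+\tfrac{d}{2}\log(2\pi)$ as $\tfrac{d}{2}\log(2\pi\sigma_\epsilon^2)$ produces exactly $\tfrac{d}{2}(1+\log(2\pi\sigma_\epsilon^2))+d\log Z - d\tau\exp(-\tau^2/2)/(\sqrt{2\pi}Z)$, which adds to the first term to give the claimed bound.

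The main obstacle is essentially bookkeeping rather than analytic difficulty: one must identify $P_0(\x_0)$ with the correct $\alpha_\epsilon$-rescaled latent cell so that the proposal lands on $\hat\x_\epsilon=\alpha_\epsilon\x_0+\sigma_\epsilon\hat\epsilonv$ rather than on $\x_0+\uv$ directly (this is the only genuinely subtle point, since the paper overloads the notation $p_{\text{model}}=p_\epsilon$), and one must correctly invoke the symmetric-truncation second-moment identity. Beyond those two steps, everything reduces to routine algebraic manipulations of $\mathrm{erf}$ and of the Gaussian tail $\phi(\tau)$, and the claim follows without further machinery.
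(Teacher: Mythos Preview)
Your proposal is correct and follows essentially the same route as the paper: apply the standard variational dequantization inequality with the truncated-normal proposal, then evaluate the entropy of the truncated normal in closed form via the second-moment identity $\E[\hat\epsilon_i^2]=1-2\tau\phi(\tau)/Z$. The only cosmetic difference is that you set up the bound directly in the $\hat\x_\epsilon$-scale (integrating $p_\epsilon$ over the $\alpha_\epsilon$-scaled cell), whereas the paper starts from the $\uv$-scale bound and then performs two separate change-of-variable steps, one for $p_\epsilon(\x_0+\uv)\to p_\epsilon(\hat\x_\epsilon)$ picking up $d\log\alpha_\epsilon$, and one for $q(\uv|\x_0)\to q(\hat\epsilonv)$ picking up $d\log(\alpha_\epsilon/\sigma_\epsilon)$, with the $d\log\alpha_\epsilon$ terms cancelling; your direct framing avoids that shuffle and is arguably cleaner, but the substance is identical.
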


Besides, we also have the following importance-weighted likelihood estimator by using $K$ i.i.d. samples by using Jensen's inequality as in \citet{burda2015importance}. As $K$ increases, the estimator gives a tighter bound, which enables 
more precise likelihood estimation.
\begin{corollary}[Importance Weighted Variational Bound under Truncated-Normal Dequantization]
Suppose we use the truncated-normal dequantization in Eqn.~\eqref{eqn:tn_dequant}, then the discrete model distribution has the following importance-weighted variational bound:
\begin{equation*}
\log\! P_0(\x_0)
\!\geq\!
\E_{\prod_{i=1}^K q(\hat\epsilonv^{(i)})}\!\!\left[
    \log \!\left(\!\!
        \frac{1}{K}\!\sum_{i=1}^K \frac{p_{\epsilon}(\hat\x_\epsilon^{(i)})}{q(\hat\epsilonv^{(i)})}\!\!
    \right)\!
\right] + d\log\sigma_\epsilon
\end{equation*}
where
\begin{equation*}
\begin{aligned}
\hat\x_\epsilon^{(i)} &= \alpha_\epsilon \x_0 + \sigma_\epsilon \hat\epsilonv^{(i)},\quad \hat\epsilonv^{(i)} \sim \Tc\Nc\left(\hat\epsilonv^{(i)}\left| \vect{0},\Iv,-\tau,\tau\right.\right)\\
q(\hat\epsilonv) &= \frac{1}{(2\pi Z^2)^{\frac{d}{2}}}\exp(-\frac{1}{2}\|\hat\epsilonv\|_2^2),\quad Z=\text{erf}\left(\frac{\tau}{\sqrt{2}}\right).
\end{aligned}
\end{equation*}
\end{corollary}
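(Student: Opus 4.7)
The plan is to mimic the standard importance-weighted ELBO derivation of \citet{burda2015importance}, applied to the discrete model distribution $P_0(\x_0)$ under the truncated-normal dequantization of Eqn.~\eqref{eqn:tn_dequant}. The three key ingredients are (i) a change of variables from the pixel dequantization coordinate $\uv$ to the standardized-noise coordinate $\hat\epsilonv$, (ii) rewriting $P_0(\x_0)$ as an importance-sampling expectation under the proposal $q(\hat\epsilonv)$, and (iii) a single application of Jensen's inequality to the $K$-sample empirical average.

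First I would unfold the definition of the discrete model density, $P_0(\x_0)=\int_{[-1/256,1/256)^d} p_{\text{model}}(\x_0+\uv)\,\dm\uv$. Under the implicit convention that $p_{\text{model}}(\x)=\alpha_\epsilon^d\,p_\epsilon(\alpha_\epsilon\x)$---which is exactly what makes $\hat\x_\epsilon=\alpha_\epsilon\x_0+\sigma_\epsilon\hat\epsilonv$ the natural evaluation point---I would apply the substitution $\hat\epsilonv=(\alpha_\epsilon/\sigma_\epsilon)\uv$. This carries $\uv\in[-1/256,1/256)^d$ onto $\hat\epsilonv\in[-\tau,\tau)^d$ and produces a Jacobian $(\sigma_\epsilon/\alpha_\epsilon)^d$, giving
$$P_0(\x_0)\;=\;\sigma_\epsilon^d\int_{[-\tau,\tau)^d} p_\epsilon(\hat\x_\epsilon)\,\dm\hat\epsilonv.$$

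Next I would insert the truncated-normal proposal $q(\hat\epsilonv)=(2\pi Z^2)^{-d/2}\exp(-\tfrac12\|\hat\epsilonv\|_2^2)$ supported on $[-\tau,\tau)^d$, so that the integral becomes the importance-sampling expectation $\sigma_\epsilon^d\,\E_{q(\hat\epsilonv)}\!\left[p_\epsilon(\hat\x_\epsilon)/q(\hat\epsilonv)\right]$. Since the $K$ samples are i.i.d.\ from $q$, the empirical average $\tfrac{1}{K}\sum_{i=1}^K p_\epsilon(\hat\x_\epsilon^{(i)})/q(\hat\epsilonv^{(i)})$ is an unbiased estimator of $P_0(\x_0)/\sigma_\epsilon^d$. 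Applying Jensen's inequality $\log\E[\cdot]\geq\E[\log(\cdot)]$ to this average and pulling $d\log\sigma_\epsilon$ outside the logarithm yields precisely the stated bound.

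The derivation is essentially routine. The only real bookkeeping subtlety is to align the two-stage change of variables so that the Jacobian $(\sigma_\epsilon/\alpha_\epsilon)^d$ together with the implicit $\alpha_\epsilon^d$ rescaling of $p_{\text{model}}$ collapses into the clean additive constant $d\log\sigma_\epsilon$, rather than a mixture of $\alpha_\epsilon$ and $\sigma_\epsilon$ factors. As a consistency check, the $K=1$ case should reduce to the preceding theorem after one further step: evaluating $-\E_{q}[\log q(\hat\epsilonv)]$ against the truncated Gaussian density yields the entropy constant $\tfrac{d}{2}(1+\log 2\pi\sigma_\epsilon^2)+d\log Z-d\tau\exp(-\tau^2/2)/(\sqrt{2\pi}Z)$ appearing there, using the known variance $1-2\tau\phi(\tau)/Z$ of $\mathcal{TN}(0,1,-\tau,\tau)$. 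It is also worth recording, following \citet{burda2015importance}, that the right-hand side is monotonically non-decreasing in $K$ and converges to $\log P_0(\x_0)$ as $K\to\infty$, which justifies the claim in the statement that larger $K$ yields a tighter bound.
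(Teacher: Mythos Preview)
Your proposal is correct and follows the same route as the paper: both derive the exact importance-sampling identity $P_0(\x_0)=\sigma_\epsilon^d\,\E_{q(\hat\epsilonv)}[p_\epsilon(\hat\x_\epsilon)/q(\hat\epsilonv)]$ via the change of variables $\uv\mapsto\hat\epsilonv$ and then apply a single Jensen inequality to the $K$-sample average, exactly as in \citet{burda2015importance}. Your explicit flagging of the convention $p_{\text{model}}(\x)=\alpha_\epsilon^d\,p_\epsilon(\alpha_\epsilon\x)$ is a nice clarification of what the paper handles in Appendix~A.1 as ``change of variables for probability density'' on $p_\epsilon$ (the two $\alpha_\epsilon$ factors cancel, leaving only $d\log\sigma_\epsilon$); otherwise the arguments coincide.
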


\begin{remark}
Another way to bridge the discrete-continuous gap is \textit{variational perspective}. We can view the process from discrete $\x_0$ to continuous $\x_\epsilon$ as a variational autoencoder, where the prior $p_{\epsilon}(\x_{\epsilon})$ is modeled by diffusion ODE. The dequantization and variational perspectives of diffusion ODEs have a close relationship both theoretically and empirically, and we detailedly discuss them in Appendix~\ref{appendix:perspective}.
\end{remark}

\section{Practical Techniques for Improving the Likelihood of Diffusion ODEs}
In this section, we propose some practical techniques for improving the likelihood of diffusion ODEs, including parameterization, a high-order training objective, and variance reduction by importance sampling. For simplicity, we denote $\dot{f_x} = \frac{\dm f(x)}{\dm x}$ for any scalar function $f(x)$.

\subsection{Velocity Parameterization}
\label{sec:v_param}

While the score matching objective $\Jc_{\SM}(\theta)$ only depends on the noise schedule, the training process is affected by many aspects such as network parameterization~\citep{song2020score,karras2022elucidating}. For example, the noise predictor $\epsilonv_\theta(\x_t,t)$ is widely used to replace the score predictor $\s_\theta(\x_t,t)$, since the noise $\epsilonv\sim\N(\mathbf 0,\Iv)$ has unit variance and is easier to fit, while $\s_\theta(\x_t,t)=-\epsilonv_\theta(\x_t,t)/\sigma_t$ is pathological and explosive near $t=0$~\citep{song2020score}.

In this work, we consider another network parameterization which is to directly predict the drift of the diffusion ODE. The parameterized model is defined by
\begin{equation}
\label{Eqn:v_ode}
    \frac{\dm\x_t}{\dm t} = \vv_\theta(\x_t,t) \coloneqq f(t)\x_t-\frac{1}{2}g^2(t)\s_\theta(\x_t,t)
\end{equation}
By rewriting the (first-order) score matching objective in Eqn.~\eqref{eqn:sm_raw}, $\Jc_{\SM}(\theta)$ is equivalent to:
\begin{equation}
\label{eqn:fm_1}
    \Jc_{\FM}(\theta)\coloneqq \int_0^T\frac{2}{g^2(t)}\E_{\x_0,\epsilonv}\left[\|\vv_\theta(\x_t,t)-\vv\|_2^2\right]\dt,
\end{equation}

where $\vv=\dot{\alpha }_{t}\x_{0} +\dot{\sigma }_{t}\epsilonv$ is the velocity to predict. Given unlimited model capacity, the optimal $\vv^*$ is
\begin{equation}
    \vv^*(\x_t,t)=f(t)\x_t-\frac{1}{2}g^2(t)\nabla_{\x}\log q_t(\x_t),
\end{equation}
which is the drift of probability flow ODE in Eqn.~\eqref{eqn:probability_flow_ODE}.

We give an intuitive explanation for $\Jc_{\text{FM}}$ in Appendix~\ref{appendix:v_pred_interpretation} that the prediction target $\vv$ is the tangent (velocity) of the diffusion path, and we name $\vv_\theta$ as \textit{velocity parameterization}. Besides, we show it empirically alleviates the \textit{imbalance problem} in noise prediction. 

In addition, we prove the equivalence between different predictors and different matching objectives for general noise schedules in Appendix~\ref{appendix:equivalence}. We also show in Appendix~\ref{appendix:relationship} that the flow matching method~\citet{lipman2022flow,albergo2022building,liu2022flow} and related techniques for improving the sample quality of diffusion models in~\citet{karras2022elucidating,salimans2022progressive,ho2022imagen} can all be reformulated in velocity parameterization. To be consistent, we still call $\Jc_{\text{FM}}$ as flow matching. It's an extended version of~\citet{lipman2022flow} with likelihood weighting and several practical modifications as detailed in Section~\ref{sec:practical_consideration}.

\subsection{Error-bounded Second-Order Flow Matching}
\label{sec:high_order}

According to \citet{chen2018neural}, the ODE likelihood of Eqn.~\eqref{Eqn:v_ode} can be evaluated by solving the following differential equation from $\epsilon$ to $T$:
\begin{equation}
    \frac{\dm\log p_t(\x_t)}{\dt}=-\tr(\nabla_{\x}\vv_\theta(\x_t,t)).
\end{equation}
As $\Jc_{\FM}$ in Eqn.~\eqref{eqn:fm_1} can only restrict the distance between $\vv_\theta$ and $\vv^*$, but not the divergence $\tr(\nabla_{\x}\vv_\theta)$ and $\tr(\nabla_{\x}\vv^*)$.
The precision and smoothness of the trace $\tr(\nabla_{\x}\vv_\theta(\x_t,t))$ affects the likelihood performance and the number of function evaluations for sampling. For simulation-free training of $\tr(\nabla_{\x}\vv_\theta(\x_t,t))$, we propose an error-bounded trace of second-order flow matching, where the second-order error is bounded by the proposed objective and first-order error.

\begin{theorem}
\label{thrm:second_trace}
(Error-Bounded Trace of Second-Order Flow Matching) Suppose we have a first-order velocity estimator $\hat\vv_1(\x_t,t)$, we can learn a second-order trace velocity model $\vv_2^{\text{trace}}(\cdot,t;\theta):\R^{d}\rightarrow\R$ which minimizes
\begin{equation*}
    \E_{q_t(\x_t)}\left[\left|\vv_2^{\text{trace}}(\x_t,t;\theta)-\tr(\nabla_{\x}\vv^*(\x_t,t))\right|^2\right],
\end{equation*}
by optimizing
{\small
\begin{equation}
\label{eqn:dsm-2-trace-obj}
    \theta^*\!=\argmin_{\theta}\E_{\x_0,\epsilonv}\!\left[\left|\vv_2^{\text{trace}}(\x_t,t;\theta)\!-\!\frac{\dot{\sigma}_t}{\sigma_t}d\!+\!\ellv_1\right|^2\right].
\end{equation}
}%
where
\begin{equation*}
\begin{aligned}
    &\ellv_1(\epsilonv,\x_0,t)\coloneqq\frac{2}{g^2(t)}\|\hat\vv_1(\x_t,t)-\vv\|_2^2,\\
    &\x_t=\alpha_t\x_0+\sigma_t\epsilonv,\quad \vv=\dot{\alpha}_t\x_0+\dot{\sigma}_t\epsilonv,\quad\epsilonv\sim \Nc(\vect{0},\Iv).
\end{aligned}
\end{equation*}
Moreover, denote the first-order flow matching error as $\delta_1(\x_t,t)\coloneqq\|\hat\vv_1(\x_t,t)-\vv^*(\x_t,t)\|_2$, then $\forall \x_t,  \theta$, the estimation error for $\vv_2^{\text{trace}}(\x_t,t;\theta)$ can be bounded by:
\begin{equation*}
 \begin{split}
&\left|\vv_2^{\text{trace}}(\x_t,t;\theta)-\tr(\nabla_{\x}\vv^*(\x_t,t))\right|\\
\leq\ &\left|\vv_2^{\text{trace}}(\x_t,t;\theta)-\vv_2^{\text{trace}}(\x_t,t;\theta^*)\right|+\frac{2}{g^2(t)}\delta_1^2(\x_t,t).
 \end{split} 
\end{equation*}
\end{theorem}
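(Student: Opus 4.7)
The plan is to express the target trace $\tr(\nabla_{\x} \vv^*(\x_t,t))$ as a conditional expectation of a computable random variable, so that the objective in Eqn.~\eqref{eqn:dsm-2-trace-obj} becomes a standard $L^2$ regression whose unrestricted minimizer equals that conditional expectation. The error bound then follows from an elementary bias-variance decomposition and a single triangle inequality.

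\textbf{Rewriting the trace.} Expanding $\vv^* = f(t)\x_t - \tfrac{1}{2}g^2(t)\sv(\x_t,t)$ with $\sv \coloneqq \nabla_\x \log q_t$ gives $\tr(\nabla_\x \vv^*) = d f(t) - \tfrac{1}{2}g^2(t)\tr\nabla_\x \sv$. To handle the Hessian-trace, I would differentiate $q_t(\x_t) = \int q_0(\x_0) q_{0t}(\x_t|\x_0)\,\dm\x_0$ twice under the integral and use the Gaussian formulas $\nabla_\x q_{0t} = -\tfrac{\epsilonv}{\sigma_t}q_{0t}$ and $\Delta_\x q_{0t} = \tfrac{\|\epsilonv\|_2^2 - d}{\sigma_t^2}q_{0t}$, together with the product-rule identity $\nabla_\x\!\cdot\!(q_t\sv)=q_t\|\sv\|_2^2+q_t\tr\nabla_\x\sv$. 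Combined with Tweedie's formula $\sv = -\E[\epsilonv\mid\x_t]/\sigma_t$, this yields $\tr\nabla_\x\sv = \tfrac{1}{\sigma_t^2}\E[\|\epsilonv-\E[\epsilonv\mid\x_t]\|_2^2 \mid \x_t] - \tfrac{d}{\sigma_t^2}$. The noise-schedule identity $f(t) + g^2(t)/(2\sigma_t^2) = \dot\sigma_t/\sigma_t$ then collapses the deterministic terms into $(\dot\sigma_t/\sigma_t)\,d$.

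\textbf{Linking $\epsilonv$-variance to $\vv$-variance.} Substituting $\x_0 = (\x_t - \sigma_t\epsilonv)/\alpha_t$ into $\vv = \dot\alpha_t \x_0 + \dot\sigma_t \epsilonv$ shows $\vv - \E[\vv\mid\x_t] = \tfrac{g^2(t)}{2\sigma_t}(\epsilonv - \E[\epsilonv\mid\x_t])$, and in particular $\vv^* = \E[\vv\mid\x_t]$. Squaring, taking conditional expectation, and plugging into the previous step gives the clean identity
\[
\tr(\nabla_\x\vv^*(\x_t,t)) \;=\; \tfrac{\dot\sigma_t}{\sigma_t}d \;-\; \E\!\left[\tfrac{2}{g^2(t)}\|\vv^*(\x_t,t)-\vv\|_2^2\,\Big|\,\x_t\right].
\]
Since the loss in Eqn.~\eqref{eqn:dsm-2-trace-obj} is quadratic in $\vv_2^{\text{trace}}$, its population minimizer satisfies $\vv_2^{\text{trace}}(\x_t,t;\theta^*) = \E[\tfrac{\dot\sigma_t}{\sigma_t}d - \ellv_1 \mid \x_t]$, which agrees with $\tr(\nabla_\x\vv^*)$ exactly when $\hat\vv_1 = \vv^*$.

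\textbf{Error bound and main obstacle.} Subtracting the displayed identity from $\vv_2^{\text{trace}}(\theta^*)$ produces a pointwise bias of $\tfrac{2}{g^2(t)}\big(\E[\|\vv-\vv^*\|_2^2\mid\x_t]-\E[\|\vv-\hat\vv_1\|_2^2\mid\x_t]\big)$. Expanding $\|\vv-\hat\vv_1\|_2^2 = \|\vv-\vv^*\|_2^2 + 2(\vv-\vv^*)^\top(\vv^*-\hat\vv_1) + \|\vv^*-\hat\vv_1\|_2^2$ and using $\E[\vv-\vv^*\mid\x_t]=0$ (from $\vv^*=\E[\vv\mid\x_t]$) kills the cross term deterministically, so the bias reduces in magnitude to $\tfrac{2}{g^2(t)}\delta_1^2(\x_t,t)$. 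The claimed inequality then follows from the triangle inequality $|\vv_2^{\text{trace}}(\theta)-\tr\nabla_\x\vv^*|\le |\vv_2^{\text{trace}}(\theta)-\vv_2^{\text{trace}}(\theta^*)| + |\vv_2^{\text{trace}}(\theta^*)-\tr\nabla_\x\vv^*|$. The main obstacle I anticipate is the bookkeeping in the first step: the Hessian-trace identity requires differentiating under the integral sign and carefully tracking the cancellation between $\|\sv\|_2^2$ and the posterior second moment of $\epsilonv$; once that identity and the schedule simplification $f + g^2/(2\sigma_t^2) = \dot\sigma_t/\sigma_t$ are in hand, the remaining algebra and the orthogonality argument in the bias calculation are routine.
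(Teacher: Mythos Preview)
Your proposal is correct and, from the point where the key trace identity
\[
\tr(\nabla_\x\vv^*(\x_t,t)) \;=\; \frac{\dot\sigma_t}{\sigma_t}d \;-\; \frac{2}{g^2(t)}\,\E\!\left[\|\vv^*(\x_t,t)-\vv\|_2^2\,\Big|\,\x_t\right]
\]
is established onward, it coincides with the paper: both identify the $L^2$ minimizer as the conditional mean, expand $\|\hat\vv_1-\vv\|_2^2$ around $\vv^*$, kill the cross term via $\E[\vv-\vv^*\mid\x_t]=0$, and finish with the triangle inequality. The genuine difference is in how you derive this identity. The paper differentiates the posterior representation $\vv^*(\x_t,t)=\int q_{t0}(\x_0|\x_t)\,\vv\,\dm\x_0$ directly, using the Bayes-rule computation $\nabla_{\x_t} q_{t0}(\x_0|\x_t)=\tfrac{2}{g^2(t)}(\vv^*-\vv)\,q_{t0}$ together with $\nabla_{\x_t}\vv=\tfrac{\dot\sigma_t}{\sigma_t}\Iv$ for fixed $\x_0$; this yields the full Jacobian $\nabla_\x\vv^*=\tfrac{\dot\sigma_t}{\sigma_t}\Iv-\tfrac{2}{g^2(t)}\E[(\vv^*-\vv)(\vv^*-\vv)^\top\mid\x_t]$ in one step, with the trace identity as an immediate corollary. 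Your route instead passes through the score, writing $\tr\nabla_\x\vv^*=df(t)-\tfrac{1}{2}g^2(t)\Delta\log q_t$, computing $\Delta\log q_t$ from the Gaussian-kernel Laplacian and Tweedie's formula, and then translating the posterior variance of $\epsilonv$ into that of $\vv$ via the affine relation $\vv-\vv^*=\tfrac{g^2(t)}{2\sigma_t}(\epsilonv-\E[\epsilonv\mid\x_t])$ and the schedule identity $f(t)+g^2(t)/(2\sigma_t^2)=\dot\sigma_t/\sigma_t$. The paper's approach is shorter and gives strictly more (the matrix identity, not just its trace); your approach is a bit more circuitous but assembles the result from more standard building blocks and makes the role of the noise schedule relations explicit.
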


The proof is provided in Appendix~\ref{appendix:trace}. In practice, we choose $\vv_2^{\text{trace}}(\x_t,t;\theta)=\tr(\nabla_{\x}\vv_\theta(\x_t,t))$ for self-regularizing.
As for scalability, we use Hutchinson's trace estimator~\citep{hutchinson1990stochastic} to unbiasedly estimate the trace, and use forward-mode automatic differentiation to compute Jacobian-vector product~\citep{lu2022maximum}.

\subsection{Timing by Log-SNR and Normalizing Velocity}
\label{sec:practical_consideration}
In practice, we make two modifications to improve the performance. First, we use negative log-SNR $\gamma_t$ to time the diffusion process. Still, we parameterize $\vv_\theta(\x_\gamma,\gamma)$ to predict the drift of the $\gamma$ timed diffusion ODE i.e. $\frac{\dm \x_\gamma}{\dm\gamma}=\vv_\theta(\x_\gamma,\gamma)$, so the corresponding predictor $\vv_\theta(\x_t,t)=\vv_\theta(\x_\gamma,\gamma)\frac{\dm\gamma}{\dt}$. Second, the velocity of the diffusion path $\vv=\dot{\alpha }_{t}\x_{0} +\dot{\sigma }_{t}\epsilonv$ may have different scales at different $t$, so we propose to predict the normalized velocity $\tilde\vv=\vv/\sqrt{\dot{\alpha }_{t}^2+\dot{\sigma }_{t}^2}$, with the parameterized network $\tilde{\vv}_\theta(\x_t,t)=\vv_\theta(\x_t,t)/\sqrt{\dot{\alpha }_{t}^2+\dot{\sigma }_{t}^2}$, which is equal to $\tilde{\vv}_\theta(\x_\gamma,\gamma)=\vv_\theta(\x_\gamma,\gamma)/\sqrt{\dot{\alpha }_{\gamma}^2+\dot{\sigma }_{\gamma}^2}$. The objective in Eqn.~\eqref{eqn:fm_1} reduces to
\begin{equation*}
    \Jc_{\FM}(\theta)\!=\!\int_{\gamma_0}^{\gamma_T}\!\! 2\frac{\dot{\alpha}_\gamma^2+\dot{\sigma}_\gamma^2}{\sigma_\gamma^2}\E_{\x_0,\epsilonv}\|\tilde{\vv}_\theta(\x_\gamma,\gamma)-\tilde{\vv}\|_2^2\dm \gamma.
\end{equation*}

And the corresponding second-order objective:
\begin{align}
\label{eqn:second_order_objective}
    \Jc_{\FM,\tr}\!=&\!\int_{\gamma_0}^{\gamma_T}\!\! 2\frac{\dot{\alpha}_\gamma^2+\dot{\sigma}_\gamma^2}{\sigma_\gamma^2}\E_{\x_0,\epsilonv}\!\Bigg(\!\sigma_\gamma\tr(\nabla\tilde\vv_\theta) \!-\! \frac{\dot{\sigma}_\gamma}{\sqrt{\dot{\alpha}_\gamma^2+\dot{\sigma}_\gamma^2}}d  \nonumber\\
    & +\frac{2\sqrt{\dot{\alpha}_\gamma^2+\dot{\sigma}_\gamma^2}}{\sigma_\gamma}\|\tilde\vv^{(s)}_\theta(\x_\gamma,\gamma)-\tilde\vv\|_2^2\Bigg)^2\dm\gamma
\end{align}

where $\tilde{\vv}^{(s)}_\theta$ is the stop-gradient version of $\tilde\vv_\theta$, since we only use the parameterized first-order velocity predictor as an estimator. Our final formulation of parameterized diffusion ODE is
\begin{equation}
    \frac{\dm \x_\gamma}{\dm\gamma}=\sqrt{\dot{\alpha}_\gamma^2+\dot{\sigma}_\gamma^2}\tilde\vv_\theta(\x_\gamma,\gamma)
\end{equation}

\subsection{Variance Reduction with Importance Sampling}
\label{sec:IS}

The flow matching is conducted for all $\gamma$ in $[\gamma_0,\gamma_T]$ through an integral. In practice, the evaluation of the integral is time-consuming, and Monte-Carlo methods are used to unbiasedly estimate the objective by uniformly sampling $\gamma$. In this case, the variance of the Monte-Carlo estimator affects the optimization process. Thus, a continuous importance distribution $p(\gamma)$ can be proposed for variance reduction. Denote $\mathcal{L}_\theta(\x_0,\epsilonv,\gamma,)=2\frac{\dot{\alpha}_\gamma^2+\dot{\sigma}_\gamma^2}{\sigma_\gamma^2}\|\tilde{\vv}_\theta(\x_\gamma,\gamma)-\tilde{\vv}\|_2^2$, then

\begin{equation}
\Jc_{\FM}(\theta)
=\E_{\gamma\sim p(\gamma)}\E_{\x_0,\epsilonv}\left[\frac{\mathcal{L}_\theta(\x_0,\epsilonv,\gamma)}{p(\gamma)}\right]
\end{equation}
We propose to use two types of importance sampling (IS), and empirically compare them for faster convergence.

\paragraph{Designed IS}
Intuitively, we can choose $p(\gamma)\propto \frac{\dot{\alpha}_\gamma^2+\dot{\sigma}_\gamma^2}{\sigma_\gamma^2}$. This way, the coefficients of $\|\tilde{\vv}_\theta(\x_\gamma,\gamma)-\tilde{\vv}\|_2^2$ is a time-invariant constant, and the velocity matching error is not amplified or shrank at any $\gamma$. This is similar to the IS in \citet{song2021maximum}, where the $g^2(t)/\sigma_t^2$ weighting before the noise matching error $\|\epsilonv_\theta(\x_t,t)-\epsilonv\|_2^2$ is cancelled, and it corresponds to uniform $\gamma$ under our parameterization.

For noise schedules used in this paper, we can obtain closed-form sampling procedures using \textit{inverse transform sampling}, see Appendix~\ref{appendix:specification}.
\paragraph{Learned IS} The variance of the Monte-Carlo estimator depends on the learned network $\tilde{\vv}_\theta$. To minimize the variance, we can parameterize the IS with another network and treat the variance as an objective. Actually, learning $p(\gamma)$ is equivalent to learning a monotone mapping $\gamma(t):[0,1]\rightarrow [\gamma_0,\gamma_T]$, which is inverse cumulative distribution function of $p(\gamma)$. We can uniformly sample $t$, and regard the IS as change-of-variable from $\gamma$ to $t$.
\begin{equation}
    \Jc_{\FM}(\theta)
=\E_{t\sim \Uc(0,1)}\E_{\x_0,\epsilonv}\left[\gamma'(t)\mathcal{L}_\theta(\x_0,\epsilonv,\gamma(t))\right]
\end{equation}

Suppose we parameterize $\gamma(t)$ with $\eta$. Denote $\mathcal{L}_{\theta,\eta}(\x_0,\epsilonv,t)=\gamma_\eta'(t)\mathcal{L}_\theta(\x_0,\epsilonv,\gamma_\eta(t))$, which is a Monte-Carlo estimator of $\Jc_{\FM}(\theta)$. Since its variance $\Var_{t,\epsilonv,\x_0}[\mathcal{L}_{\theta,\eta}(\x_0,\epsilonv,t)]=\E_{t,\epsilonv,\x_0}[\mathcal{L}_{\theta,\eta}^2(\x_0,\epsilonv,t)]-\Jc_{\FM}^2(\theta)$ and $\Jc_{\FM}(\theta)$ is invariant to $\gamma_\eta(t)$, we can minimize $\E_{t,\epsilonv,\x_0}[\mathcal{L}_{\theta,\eta}^2(\x_0,\epsilonv,t)]$ for variance reduction.

While this approach seeks the optimal IS, it causes extra overhead by introducing an IS network, requiring complex gradient operation or additional training steps. Thus, we only use it as a reference to test the optimality of our designed IS. We simplify the variance reduction in \citet{kingma2021variational}, and propose an \textit{adaptive IS} algorithm, which is detailed in Appendix~\ref{appendix:is}. Empirically, we show that designed IS is a more preferred approach since it is training-free and achieves a similar convergence speed to learned IS.

\section{Related Work}
Diffusion models, also known as score-based generative models (SGMs), have achieved state-of-the-art sample quality and likelihood~\citep{dhariwal2021diffusion,karras2022elucidating,kingma2021variational} among deep generative models, yielding extensive downstream applications such as speech and singing synthesis~\citep{chen2020wavegrad,liu2022diffsinger}, conditional image generation~\citep{ramesh2022hierarchical,rombach2022high}, guided image editing~\citep{meng2021sdedit,nichol2021glide}, unpaired image-to-image translation~\citep{zhao2022egsde} and inverse problem solving~\citep{chung2022diffusion,kawar2022denoising}.

Diffusion ODEs are special formulations of neural ODEs and can be viewed as continuous normalizing flows~\citep{chen2018neural}. Training of diffusion ODEs can be categorized into simulation-based and simulation-free methods. The former utilizes the exact likelihood evaluation formula of ODE~\citep{chen2018neural}, which leads to a maximum likelihood training procedure~\citep{grathwohl2018ffjord}. However, it involves expensive ODE simulations for forward and backward propagation and may result in unnecessary complex dynamics~\citep{finlay2020train} since it only cares about the model distribution at $t=0$. The latter trains neural ODEs by matching their trajectories to a predefined path, such as the diffusion process. This approach is proposed in \citet{song2020score}, and extended in \citet{lu2022maximum,lipman2022flow,albergo2022building,liu2022flow}. We propose velocity parameterization which is an extension of~\citet{lipman2022flow} with practical modifications and claim that the paths used in \citet{lipman2022flow,albergo2022building,liu2022flow} are special cases of noise schedule. Aiming at maximum likelihood training, we also get inspiration from \citet{lu2022maximum}. We additionally apply likelihood weighting and propose to finetune the model with high-order flow matching.

Variance reduction techniques are commonly used for training diffusion models. \citet{nichol2021improved} proposes an importance sampling (IS) for discrete-time diffusion models by maintaining the historical losses at each time step and building the proposal distribution based on them. \citet{song2021maximum} designs an IS to cancel out the weighting before the noise matching loss. \citet{kingma2021variational} proposes a variance reduction method that is equivalent to learning a parameterized IS. We simply their procedure and propose an adaptive IS scheme for ablation. By empirically comparing different IS methods, we find a designed and analytical IS distribution that achieves a good performance-efficiency trade-off.

\section{Experiments}
\label{sec:experiments}
In this section, we present our training procedure and experiment settings, and our ablation studies to demonstrate how our techniques improve the likelihood of diffusion ODEs.

We implement our methods based on the open-source codebase of \citet{kingma2021variational} implemented with JAX~\citet{bradbury2018jax}, and use similar network and hyperparameter settings. We first train the model by optimizing our first-order flow matching objective $\min_\theta\Jc_{\FM}(\theta)$ for enough iterations, so that the first-order velocity prediction has little error. Then, we finetune the pretrained first-order model using a mixture of first-order and second-order flow matching objectives $\min_\theta\Jc_{\FM}(\theta)+\lambda\Jc_{\FM,\tr}(\theta)$. The finetune process converges in much fewer iterations than pretraining. Finally, we evaluate the likelihood on the test set using the variational bound under our proposed truncated-normal dequantization. The detailed training configurations are provided in Appendix~\ref{appendix:details}.

Our training and evaluation procedure is feasible for any noise schedule $\alpha_\gamma,\sigma_\gamma$. We choose two special noise schedules:
\paragraph{Variance Preserving (VP)} $\alpha_\gamma^2+\sigma_\gamma^2=1$. This schedule is widely used in diffusion models, which yields a process with a fixed variance of one when the initial
distribution has a unit variance.
\paragraph{Straight Path (SP)} $\alpha_\gamma+\sigma_\gamma=1$. This schedule is used in \citet{lipman2022flow,albergo2022building,liu2022flow}, where they call it OT path and claim it leads to better dynamics since the pairwise diffusion paths are straight lines. We simply regard it as a special kind of noise schedule.

Under these two schedules, $\alpha_\gamma,\sigma_\gamma$ are uniquely determined by $\gamma$, and we do not have any extra hyperparameters. They also have corresponding objectives and designed IS, which can be expressed in closed form (see Appendix~\ref{appendix:specification} for details). We train our i-DODE on CIFAR-10~\citep{krizhevsky2009learning} and ImageNet-32\footnote{There are two different versions of ImageNet32 and ImageNet64 datasets. For fair comparisons, we use both versions of ImageNet32, one is downloaded from \url{https://image-net.org/data/downsample/Imagenet32_train.zip}, following~\citet{lipman2022flow}, and the other is downloaded from \url{http://image-net.org/small/train_32x32.tar} (old version, no longer available), following \citet{song2021maximum} and \citet{kingma2021variational}. The former dataset applies anti-aliasing and is easier for maximum likelihood training.}~\citep{deng2009imagenet}, which are two popular benchmarks for generative modeling and density estimation.

\subsection{Likelihood and Samples}

\begin{table*}[t]
    \vspace{-.1in}
    \centering
    \caption{\label{tab:result}\small{Negative log-likelihood (NLL) in bits/dim (BPD), sample quality (FID scores) and number of function evaluations (NFE) on CIFAR-10 and ImageNet 32x32. For fair comparisons, we list NLL results of previous ODEs without variational dequantization or data augmentation (unless specifically stated), and FID/NFE results obtained by adaptive-step ODE solver. Results with ``$\slash$'' mean they are not reported in the original papers or do not apply. $^\dagger$For VDM, since they have no ODE formulation, the FID score is obtained by 1000 step discretization of their SDE. We report their corresponding ODE result in the ablation study. $^*$Corresponding to the old version ImageNet-32 dataset.}}
    \vskip 0.1in
    \begin{small}
    \begin{tabular}{lcccccc}
    \toprule
    Model & \multicolumn{3}{c}{CIFAR-10} & \multicolumn{3}{c}{ImageNet-32} \Bstrut \\
    \cline{2-7}
    & NLL $\downarrow$ & FID $\downarrow$&NFE $\downarrow$ & NLL $\downarrow$ & FID $\downarrow$&NFE $\downarrow$ \Tstrut \\
    \midrule
        VDM~\citep{kingma2021variational} & 2.65 &7.60$^\dagger$&1000 & 3.72$^*$&$\slash$&$\slash$ \Bstrut \\
        VDM (with data augmentation) ~\citep{kingma2021variational} & 2.49 &$\slash$&$\slash$ & $\slash$&$\slash$&$\slash$ \Bstrut \\
    \hline
    \textit{(Previous ODE)}&&&&&& \Tstrut \\
    FFJORD~\citep{grathwohl2018ffjord}&3.40& $\slash$ & $\slash$ & $\slash$ & $\slash$ & $\slash$\\
    ScoreSDE~\citep{song2020score}&2.99&2.92& $\slash$ & $\slash$ & $\slash$ & $\slash$\\
    ScoreFlow~\citep{song2021maximum}&2.90&5.40& $\slash$ &3.82$^*$&10.18$^*$&$\slash$\\
    Soft Truncation~\citep{kim2022soft}&3.01&3.96&$\slash$&3.90$^*$&8.42$^*$&$\slash$\\
    Flow Matching~\citep{lipman2022flow}&2.99&6.35&142&3.53&5.31&122\\
    Stochastic Interp.\citep{albergo2022building}&2.99&10.27&$\slash$&3.48&8.49&$\slash$ \Bstrut\\
    
    \hline
        i-DODE (SP)~(\bf{ours}) & 2.56 &11.20 &162 &3.44/\textbf{3.69}$^*$&10.31&138 \Tstrut\\
        i-DODE (VP)~(\bf{ours}) & 2.57 &10.74&126  & \textbf{3.43}/3.70$^*$&9.09&152\\
        i-DODE (VP, with data augmentation)~(\bf{ours}) & \bf{2.42} &3.76& 215 & $\slash$&$\slash$&$\slash$\\
    \bottomrule
    \end{tabular}
    \end{small}
    \vspace{-0.15in}
\end{table*}

\begin{figure}[ht]
	\centering		
 	\includegraphics[width=.9\linewidth]{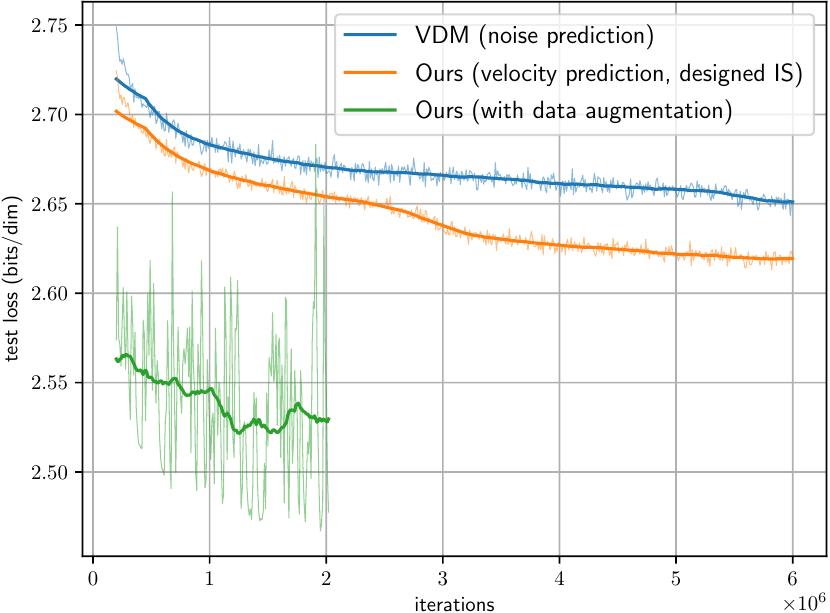}\\
 	\vspace{-.1in}
	\caption{\label{fig:test_curve} Test loss curve in the pretraining phase, compared to VDM~\cite{kingma2021variational}. We compute the loss on the test set by the SDE likelihood bound in \citet{kingma2021variational}.}
 	\vspace{-.1in}
\end{figure}

Table~\ref{tab:result} shows our experiment results on CIFAR-10 and ImageNet-32 datasets. Our models are pretrained with velocity parameterization, designed IS, and finetuned with second-order flow matching. We report the likelihood values using our truncated-normal dequantization with the importance-weighted estimator under $K=20$. To compute the FID values, we apply an adaptive-step ODE solver to draw samples from the diffusion ODEs. We also report the NFE during the sampling process, which reflects the smoothness of the dynamics. 

Combining our training techniques and dequantization, we exceed the likelihood of previous ODEs, especially by a large margin on CIFAR-10. In Figure~\ref{fig:test_curve}, we compare our pretraining phase to VDM~\citep{kingma2021variational}, which indicates that our techniques achieve 2x$\sim$3x times of previous convergence speed. We further strengthen the likelihood results by employing data augmentation techniques and a larger network, following VDM. We observe that augmented training data may cause fluctuations in the training and testing losses. When we select the models that achieve the best testing performance, we obtain an SDE likelihood of 2.46 at around only 2M iterations, compared to 2.49 of VDM at 10M iterations.

We do not observe the superiority
of SP to VP such as lower FID and NFE as in~\citet{lipman2022flow}. We suspect it may result from maximum likelihood training, which puts more emphasis on the high log-SNR region. More theoretical comparisons with~\citet{lipman2022flow} are given in Appendix~\ref{appendix:comparison_flow_matching}.

Randomly generated samples from our models are provided in Appendix~\ref{appendix:samples}. Since we use network architecture and techniques targeted at the likelihood, our FID is worse than the state-of-the-art, which can be improved by designing time weighting to emphasize the training at small log-SNR levels~\citep{kingma2021variational} or using high-quality sampling algorithms such as PC sampler~\citep{song2020score}. Besides, data augmentation and a larger network notably improve the FID to 3.76 on CIFAR-10, while achieving the state-of-the-art likelihood.

\subsection{Ablations}

\begin{figure}[t]
	\centering		
 	\includegraphics[width=.8\linewidth]{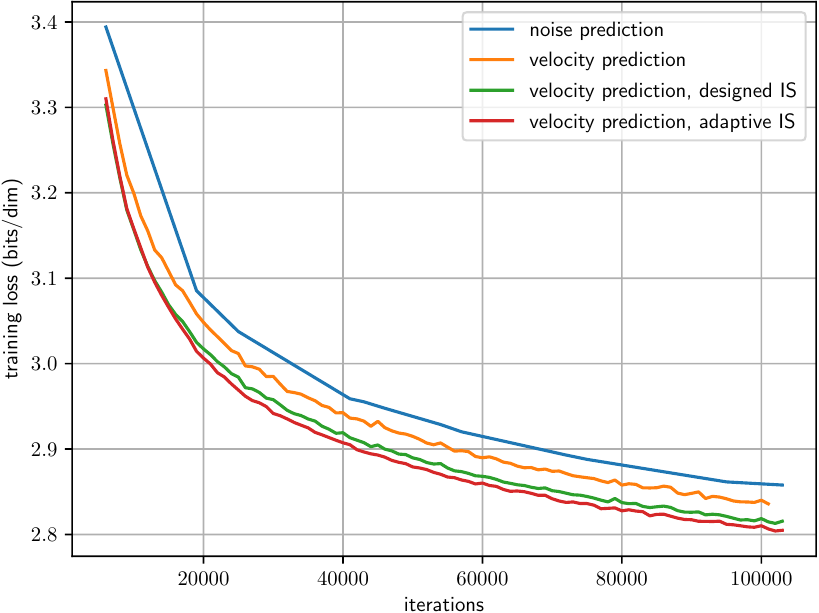}\\
 	\vspace{-.1in}
	\caption{\label{fig:training_curve_ablation}Training curve from scratch for ablation. We compute the loss on the training set by the SDE likelihood bound in \citet{kingma2021variational}.}
 	\vspace{-.1in}
\end{figure}

\begin{figure}[t]

	\centering
	\begin{minipage}{.48\linewidth}
		\centering
			\includegraphics[width=.9\linewidth]{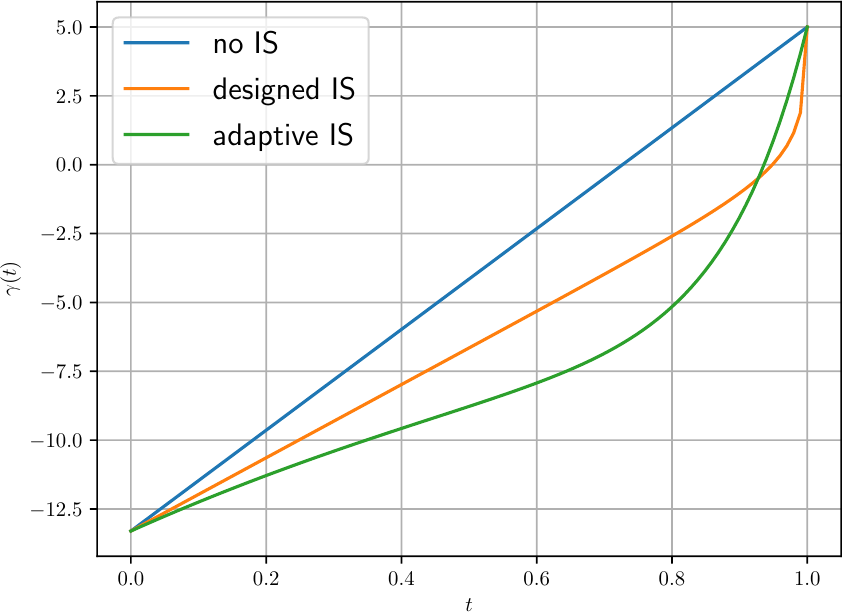}\\
\small{(a) $\gamma(t)$}
	\end{minipage}
	\begin{minipage}{.48\linewidth}
	\centering
	\includegraphics[width=.9\linewidth]{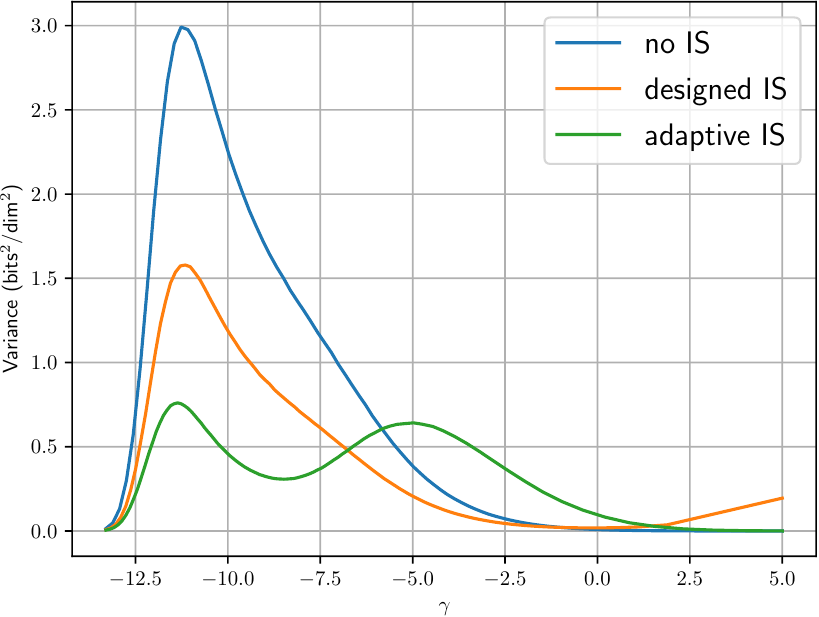}\\
\small{	(b) Variance at different log-SNR levels.}
\end{minipage}
   \vspace{-.05in}
	\caption{\label{fig:vis_is}Visualization of importance sampling: (a) The inverse cumulative distribution function $\gamma(t)$ of the proposal distribution $p(\gamma)$, which maps uniform $t$ to importance sampled $\gamma$ (b) The variance of Monte-Carlo estimator $\Var\left[\gamma'(t)\mathcal{L}_\theta(\x_0,\epsilonv,\gamma(t))\right]$ at different noise levels, estimated using 32 data samples $\x_0$ and 100 noise samples $\epsilonv$. The peak variance is achieved around $\gamma=-11.2$.}
	\vspace{-.15in}
\end{figure}

Due to the expensive time cost of pretraining, we only conduct ablation studies on CIFAR-10 under the VP schedule. First, we test our techniques for pretraining when training from scratch. We plot the training curves with noise predictor~\citep{kingma2021variational} and velocity predictor, then further implement our IS strategies (Figure~\ref{fig:training_curve_ablation}). We find that velocity parameterization and IS both accelerate the training process, while designed IS performs slightly worse than adaptive IS. Considering the extra time cost for learning the IS network, we conclude that designed IS is a better choice for large-scale pretraining. Then we visualize different IS by plotting the mapping from uniform $t$ to importance sampled $\gamma$, as well as the variance at different noise levels on the pretrained model (Figure~\ref{fig:vis_is}). We show that the IS reduces the variance by sampling more in high log-SNR regions.

\begin{table}[ht]

    \vspace{-.1in}
    \centering
    \caption{\label{tab:ablation_converged}Ablation study when converged. We report negative log-likelihood (NLL) in bits/dim (BPD), sample quality (FID scores), and number of function evaluations (NFE) after our pretraining and finetuning phase. We evaluate NLL by uniform (U) and truncated-normal (TN) dequantization without importance weight. We retrain VDM and evaluate its ODE form.}
    \vskip 0.1in
    \begin{small}
    \resizebox{0.48\textwidth}{!}{
    \begin{tabular}{lcccc}
    \toprule
    Model &  NLL (U) & NLL (TN) & FID & NFE\\
    \midrule
       VDM~\citep{kingma2021variational}  & 2.78 & 2.64 &8.65 &213\\
       Pretrain~(\bf{ours}) & 2.75 & 2.61 &10.66 &248\\
       + Finetune~(\bf{ours}) & 2.74 & \bf{2.60} & 10.74 &\bf{126}\\
    \bottomrule
    \end{tabular}
    }
    \end{small}
\end{table}

Next, we test our pretraining, finetuning and evaluation on the converged model (Table~\ref{tab:ablation_converged}). As stated before, our pretraining has faster loss descent and converges to a higher likelihood than VDM. Based on it, our finetuning slightly improves the ODE likelihood and smooths the flow, leading to much less NFE when sampling. Our truncated-normal dequantization is also a key factor for precise likelihood computing, which surpasses previous uniform dequantization by a large margin.

In agreement with \citet{song2021maximum}, our improvements in likelihood lead to slightly worse FIDs. We also argue that the degeneration is small in terms of visual quality. We provide additional samples in Appendix~\ref{appendix:samples} for comparison. 

\section{Conclusion}
We propose improved techniques for simulation-free maximum likelihood training and likelihood evaluation of diffusion ODEs. Our training stage involves improved pretraining and additional finetuning, which results in fast convergence, high likelihood and smooth trajectory. We improve the likelihood evaluation with novel truncated-normal dequantization, which is training-free and tailored for diffusion ODEs. Empirically, we achieve state-of-the-art likelihood on image datasets without variational dequantization or data augmentation and make a breakthrough on CIFAR-10 compared to previous ODEs. Due to resource limitations, we didn't explore tuning of hyperparameters and network architectures, which are left for future work.

\section*{Acknowledgements}
This work was supported by the National Key Research and Development Program of China
(2020AAA0106302); 
NSF of China Projects (Nos. 62061136001, 61620106010, 62076145, U19B2034, U1811461, U19A2081, 6197222, 62106120, 62076145); a grant from Tsinghua Institute for Guo Qiang; the High Performance Computing Center, Tsinghua University. J.Z was also supported by the New Cornerstone Science Foundation through the XPLORER PRIZE. The large-scale training was supported by Shengshu Technology.

\bibliography{example_paper}
\bibliographystyle{icml2023}

\newpage
\appendix
\onecolumn

\section{Different perspectives of diffusion ODEs for bridging the gap between discrete and continuous data}
\label{appendix:perspective}

Suppose the discrete data $\Xv_0$ to be modelled are 8-bit integers $\{0,1,\dots,255\}$. Following the common transform in diffusion models, we first normalize it to range [-1,1] by the mapping $\x_0=\frac{\Xv_0+\frac{1}{2}-128}{128}$. In the following discussions, we consider the model distribution $P_0(\x_0)$ on transformed discrete data $\x_0$, which is equal to $P_0(\Xv_0)$ since the scaling does not alter the discrete probability.

\subsection{Dequantization perspective}
The discrete data $\x_0$ has a uniform gap $\frac{1}{128}$ between two consecutive values on each dimension. We can define the discrete model distribution as
\begin{equation}
    P_0(\x_0)=\int_{\uv\in [-\frac{1}{256},\frac{1}{256}]^d} p_\epsilon(\x+\uv)\dm \uv
\end{equation}
where $p_\epsilon$ is the diffusion ODE defined at time $\epsilon$. Then, we can introduce a dequantization distribution $q(\uv|\x_0)$ with support over $[-\frac{1}{256},\frac{1}{256})^d$. Treating $q$ as an approximate posterior, we obtain the following variational bound~\citep{ho2019flow++}:
\begin{equation}
    \log P_0(\x_0)\geq \E_{q(\uv|\x_0)}\left[\log p_\epsilon(\x_0+\uv)-\log q(\uv|\x_0)\right]
\end{equation}
The ODE term $\log p_\epsilon(\x_0+\uv)$ can be evaluated exactly by solving another ODE called ``Instantaneous Change of Variables"~\citep{chen2018neural}. As for the posterior $\log q(\uv|\x_0)$, we can derive closed-form solutions for predefined posterior formulation. We provide the details for uniform dequantization and our proposed truncated-normal dequantization.
\paragraph{Uniform dequantization} We simply use uniform posterior $q(\uv|\x_0)=\Uc(-\frac{1}{256},\frac{1}{256})$. In this case, $\log q(\uv|\x_0)=d\log 128$ is a constant, and the bound becomes
\begin{equation}
    \log P_0(\x_0)\geq \E_{\uv\sim\Uc(-\frac{1}{256},\frac{1}{256})}\left[\log p_\epsilon(\x_0+\uv)\right]-d\log 128
\end{equation}
Similar to \citet{burda2015importance}, we can also sample multiple $\uv$ to derive a tighter bound, which is called importance weighted estimator:
\begin{equation}
    \log P_0(\x_0)\geq \E_{\uv^{(1)},\dots,\uv^{(K)}\sim\Uc(-\frac{1}{256},\frac{1}{256})}\left[\log \left(\frac{1}{K}\sum_{i=1}^K p_\epsilon(\x_0+\uv^{(i)})\right)\right]-d\log 128
\end{equation}

However, this dequantization will cause a training-evaluation gap. For training, we fit $p_\epsilon$ to the distribution of $\x_\epsilon=\alpha_\epsilon\x_0+\sigma_\epsilon\epsilonv,\epsilonv\sim\Nc(\mathbf 0,\Iv)$. For evaluation, we test $p_\epsilon$ on uniform dequantized $\x_0+\uv,\uv\sim\Uc(-\frac{1}{256},\frac{1}{256})$. This gap will degenerate the likelihood performance, as we will show later.
\paragraph{Truncated-normal dequantization}
To bridge the training-evaluation gap, we test $p_\epsilon$ on $\hat\x_\epsilon = \alpha_\epsilon \x_0 + \sigma_\epsilon \hat\epsilonv$, where $\hat\epsilonv$ obeys a truncated-normal distribution to make sure the range of $\uv$ on each dimension does not exceed $[-\frac{1}{256},\frac{1}{256}]$. Specifically, denote $\tau \coloneqq \frac{\alpha_\epsilon}{256\sigma_\epsilon}$, we define the truncated-normal distribution as
\begin{equation}
    \hat\epsilonv \sim \Tc\Nc\left(\hat\epsilonv\left| \vect{0},\Iv,-\tau,\tau\right.\right)
\end{equation}
Let
\begin{equation}
    \uv \coloneqq \frac{\sigma_\epsilon}{\alpha_\epsilon}\hat\epsilonv \in \left[-\frac{1}{256}, \frac{1}{256}\right]
\end{equation}
By the change of variables for probability density, we have
\begin{equation}
    \log p_\epsilon(\x_0 + \uv)
    = \log p_{\epsilon}\left(
        \x_0 + \frac{\sigma_\epsilon}{\alpha_\epsilon}\hat\epsilonv
    \right)
    = \log p_{\epsilon}(\hat\x_\epsilon) + d\log\alpha_\epsilon 
\end{equation}
\begin{equation}
    \log q(\uv|\x_0)=\log q\left(\frac{\sigma_\epsilon}{\alpha_\epsilon}\hat\epsilonv\right)=\log q(\hat{\epsilonv})+d\log\frac{\alpha_\epsilon}{\sigma_\epsilon}
\end{equation}
where $q(\hat{\epsilonv})$ is the probability distribution function of truncated-normal distributions
\begin{equation}
\label{eqn:tn-pdf}
    q(\hat\epsilonv) = \frac{1}{(2\pi Z^2)^{\frac{d}{2}}}\exp(-\frac{1}{2}\|\hat\epsilonv\|_2^2),\quad Z \coloneqq \Phi(\tau)-\Phi(-\tau)=\text{erf}\left(\frac{\tau}{\sqrt{2}}\right)
\end{equation}
Here $\Phi(\cdot)$ is the cumulative distribution function of standard normal distribution, and $\text{erf}(\cdot)$ is the error function. Combining the equations above, the bound is reduced to
\begin{equation}
\label{eqn:tn_bound_raw}
    \log P_0(\x_0)\geq \E_{q(\hat\epsilonv)}\left[\log p_\epsilon(\hat\x_\epsilon)-\log q(\hat\epsilonv)\right]+d\log\sigma_\epsilon
\end{equation}
Further, we can derive closed-form solutions for the entropy term of truncated-normal distribution:
\begin{equation}
    -\E_{q(\hat\epsilonv)}[\log q(\hat\epsilonv)] = \Hc(q(\hat\epsilonv)) = d\log(\sqrt{2\pi e}) + d\log Z - d\frac{\tau}{\sqrt{2\pi}Z}\exp(-\frac{1}{2}\tau^2)
\end{equation}
and we finally obtain the exact form of the bound:
\begin{equation}
\label{eqn:tn_bound}
    \log P_0(\x_0)
\geq
\E_{\hat\epsilonv\sim \Tc\Nc\left( \vect{0},\Iv,-\tau,\tau\right)}\left[
    \log p_{\epsilon}(\hat\x_\epsilon)
\right] + \frac{d}{2}(1 + \log(2\pi\sigma_\epsilon^2)) + d\log Z - d\frac{\tau}{\sqrt{2\pi}Z}\exp(-\frac{1}{2}\tau^2)
\end{equation}
where the ODE log-likelihood $\log p_{\epsilon}(\hat\x_\epsilon)$ can also be evaluated exactly. Similarly, we have the corresponding importance-weighted estimator by modifying Eqn.~\eqref{eqn:tn_bound_raw}:
\begin{equation}
    \log P_0(\x_0)\geq \E_{\hat\epsilonv^{(1)},\dots,\hat\epsilonv^{(K)}\sim \Tc\Nc\left( \vect{0},\Iv,-\tau,\tau\right)}\left[\log \left(\frac{1}{K}\sum_{i=1}^K\frac{p_\epsilon(\hat\x_\epsilon^{(i)})}{q(\hat\epsilonv^{(i)})}\right) \right]+d\log\sigma_\epsilon
\end{equation}
where $\hat\x_\epsilon^{(i)} \coloneqq \alpha_\epsilon \x_0 + \sigma_\epsilon \hat\epsilonv^{(i)}$, and $q(\hat\epsilonv)$ is expressed in Eqn.~\eqref{eqn:tn-pdf}.

In our experiments, we choose the start time $\gamma_\epsilon=-13.3$. Under this setting, we have $\tau\approx 3$, and the truncated-normal distribution $\Tc\Nc\left( \vect{0},\Iv,-\tau,\tau\right)$ is almost the same as the standard normal distribution $\Nc(\vect{0},\Iv)$ due to the 3-$\sigma$ principle. Thus, $\x_\epsilon$ used in training and $\hat{\x}_\epsilon$ used in testing are virtually identically distributed, resulting in a negligible training-evaluation gap.
\subsection{Variational perspective}
From the variational perspective, we can view the transition from discrete $\x_0$ to continuous $\x_\epsilon$ as a variational autoencoder, where the prior $p_{\epsilon}(\x_{\epsilon})$ is modeled by diffusion ODE, and the approximate posterior $q_{0\epsilon}(\x_{\epsilon}|\x_0)$ is the analytical Gaussian transition kernel in the forward diffusion process at the start. We have the variational bound:
\begin{equation}
\label{eqn:v_bound_raw}
    \log P_0(\x_0)
\geq
\E_{q_{0\epsilon}(\x_{\epsilon}|\x_0)}\left[
    \log p_{\epsilon 0}(\x_0 | \x_{\epsilon})
    + \log p_{\epsilon}(\x_{\epsilon})
    - \log q_{0\epsilon}(\x_{\epsilon}|\x_0)
\right]
\end{equation}
where $q_{0\epsilon}(\x_{\epsilon}|\x_0)=\Nc(\x_\epsilon|\alpha_\epsilon\x_0,\sigma_\epsilon^2\Iv)$. We want to use the reconstruction term $p_{\epsilon 0}(\x_0 | \x_{\epsilon})$ to approximate $q_{\epsilon 0}(\x_0 | \x_{\epsilon})$. Note that
\begin{equation}
    q_{\epsilon0}(\x_0|\x_\epsilon) = \frac{q_{0\epsilon}(\x_\epsilon|\x_0)q_0(\x_0)}{q_\epsilon(\x_\epsilon)}
\end{equation}
for small enough $\epsilon$, we have $q_0(\x_0)\approx q_\epsilon(\x_\epsilon)$, so $q_{\epsilon0}(\x_0|\x_\epsilon) \propto q_{0\epsilon}(\x_\epsilon|\x_0)=\prod_i q_{0\epsilon}(\x_{\epsilon,i}|\x_{0,i})$, where $i$ represents the $i$-th dimension. Thus, we also choose $p_{\epsilon0}(\x_0|\x_\epsilon)$ as a factorized distribution, following \citet{kingma2021variational}:
\begin{equation}
    p_{\epsilon0}(\x_0|\x_\epsilon) = \prod_{i} p_{\epsilon0}(\x_{0,i}|\x_{\epsilon,i})
\end{equation}
where each
\begin{equation}
    p_{\epsilon0}(\x_{0,i}|\x_{\epsilon,i}) \propto q_{0\epsilon}(\x_{\epsilon,i}|\x_{0,i}) \propto \exp\left(-\frac{\left(\x_{\epsilon,i} - \alpha_{\epsilon}\x_{0,i}\right)^2}{2\sigma_\epsilon^2}\right)
\end{equation}
As $\x_0$ is a discrete variable, the probability can be computed by softmax, so we have
\begin{equation}
\label{eqn:variational_recon}
    \log p_{\epsilon0}(\x_0|\x_\epsilon) = \sum_{i=1}^d \log \mathrm{softmax}_{j=0}^{255}\left(-\frac{\left(\x_{\epsilon,i} - \alpha_{\epsilon}j\right)^2}{2\sigma_\epsilon^2}\right)[\x_{0,i}]
\end{equation}
Besides, the Gaussian entropy term can be computed exactly
\begin{equation}
    -\E_{q_{0\epsilon}(\x_\epsilon|\x_0)}[
    \log q_{0\epsilon}(\x_{\epsilon}|\x_0)]=\Hc(q_{0\epsilon}(\x_\epsilon|\x_0))=  \frac{d}{2}(1 + \log(2\pi\sigma_\epsilon^2))
\end{equation}
and the bound is reduced to
\begin{equation}
\label{eqn:variational_bound}
        \log P_0(\x_0)
\geq
\E_{\epsilonv\sim \Nc(\vect0,\Iv)}\left[
\log p_{\epsilon}(\x_{\epsilon})+\log p_{\epsilon 0}(\x_0 | \x_{\epsilon})
\right]+\frac{d}{2}(1 + \log(2\pi\sigma_\epsilon^2))
\end{equation}
where $\x_\epsilon=\alpha_\epsilon\x_0+\sigma_\epsilon\epsilonv$, $\log p_{\epsilon0}(\x_0|\x_\epsilon)$ is given in Eqn.~\eqref{eqn:variational_recon} and $\log p_{\epsilon}(\x_{\epsilon})$ is the exact ODE likelihood. We also have the importance weighted estimator by modifying Eqn.~\eqref{eqn:v_bound_raw}:
\begin{equation}
    \log P_0(\x_0)\geq\E_{\epsilonv^{(1)},\dots,\epsilonv^{(K)}\sim \Nc(\vect0,\Iv)}\left[\log\left(\frac{1}{K}\sum_{i=1}^K\frac{p_\epsilon(\x_\epsilon)p_{\epsilon0}(\x_0|\x_\epsilon)}{q_{0\epsilon}(\x_\epsilon|\x_0)}\right)\right]
\end{equation}
\subsection{Practical connections and results}
Let us consider the bound without importance weighted estimator. By observing the bound in Eqn.~\eqref{eqn:tn_bound} for truncated-normal dequantization and the bound in Eqn.~\eqref{eqn:variational_bound} for variational perspective, we can find that they have similar formulations. Suppose we use $\gamma_\epsilon=-13.3$, we have $\tau\approx3.01869,Z\approx0.9974613$, and the bound in Eqn.~\eqref{eqn:tn_bound} is approximately
\begin{equation}
    \log P_0(\x_0)
\geq
\E_{\hat\epsilonv\sim \Tc\Nc\left( \vect{0},\Iv,-\tau,\tau\right)}\left[
    \log p_{\epsilon}(\hat\x_\epsilon)
\right] + \frac{d}{2}(1 + \log(2\pi\sigma_\epsilon^2))-0.01522\times d
\end{equation}
Next, consider the variational perspective. Though the reconstruction term $\log p_{\epsilon0}(\x_0|\x_\epsilon)$ in Eqn.~\eqref{eqn:variational_bound} depends on the data distribution, empirically it is nearly a constant $\log p_{\epsilon0}(\x_0|\x_\epsilon)\approx -0.01\times d$. So we have the approximate bound
\begin{equation}
            \log P_0(\x_0)
\geq
\E_{\epsilonv\sim \Nc(\vect0,\Iv)}\left[
\log p_{\epsilon}(\x_{\epsilon})
\right]+\frac{d}{2}(1 + \log(2\pi\sigma_\epsilon^2))-0.01\times d
\end{equation}
We note the only difference is that our proposed truncated-normal dequantization uses $\hat\x_\epsilon$ rather than $\x_\epsilon$ for ODE likelihood evaluation, and there is a small constant difference in the bound.

\begin{table}
    \vspace{-.1in}
    \centering
    \caption{\label{tab:different_bound}Likelihood results under different bound and number of importance samples $K$. $K=1$ means we do not use the importance-weighted estimator.}
    \vskip 0.1in
    \begin{small}
    \begin{tabular}{lccccccccc}
    \toprule
    NLL& \multicolumn{3}{c}{Uniform Dequantization} & \multicolumn{3}{c}{Variational}&\multicolumn{3}{c}{Truncated-Normal Dequantization} \\
    \cmidrule{2-10}
    & $K=1$&$K=5$&$K=20$& $K=1$&$K=5$&$K=20$& $K=1$&$K=5$&$K=20$\\
    \midrule
       CIFAR-10 (VP) & 2.74 &2.72  &2.71 &2.60&2.59&2.58&2.60&2.58&2.57\\
       CIFAR-10 (SP) &2.81  & 2.79 &2.78 &2.61&2.59&2.58&2.60&2.57&2.56\\
       ImageNet-32 (VP) &3.52  &3.51  &3.50 &3.46& 3.44 & 3.44 &3.45& 3.44 &3.43\\
       ImageNet-32 (SP) &3.57  & 3.56 &3.55 &3.48&3.47&3.46&3.47&3.45&3.44\\
    \bottomrule
    \end{tabular}
    \end{small}
    \vspace{-0.15in}
\end{table}

\begin{remark}
For high-dimensional data such as images, directly comparing log-likelihood may suffer from scaling issues by the dimension. In practice, we usually compare the BPD (bits/dim) by
\begin{equation}
    \text{BPD}=\E_{\x_0\sim q_0}\left[\frac{-\log P_0(\x_0)}{d\log2}\right]
\end{equation}
where $q_0$ is the data distribution. Since BPD averages the log-likelihood on each dimension, scaling dimensionality has no effect on the final result.
\end{remark}

We test the two types of dequantization and the variational perspective on our final models, using different numbers of importance samples $K$. The results are listed in Table~\ref{tab:different_bound}. Empirically, truncated-normal dequantization performs slightly better than variational, while uniform dequantization gives a bad likelihood due to the large training-evaluation gap. We also observe that increasing $K$ further improves the results by giving a tighter bound.

\begin{figure}[t]
	\centering		
 	\includegraphics[width=.4\linewidth]{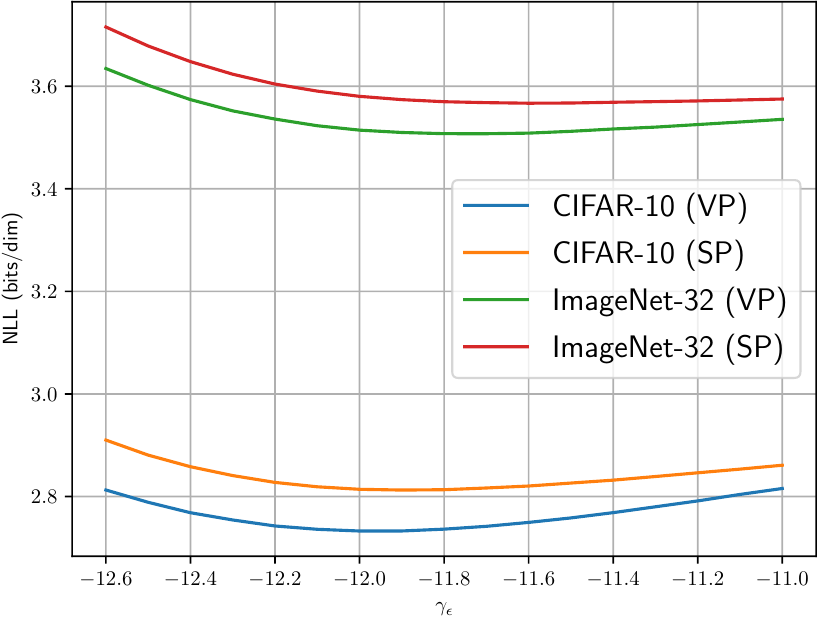}\\
 	\vspace{-.1in}
	\caption{\label{fig:uniform_start}The likelihood evaluation results under uniform dequantization for different start times $\gamma_\epsilon$. To plot the curve, we estimate the likelihood using the first 1024 test samples for CIFAR-10, and the first 512 test samples for ImageNet-32.}
 	\vspace{-.1in}
\end{figure}
\begin{remark}
Since uniform dequantized data has a larger noise level than truncated-normal dequantized data, we find evaluating $\log p_{\epsilon}(\x_0+\uv)$ at start time $\gamma_\epsilon=-13.3$ leads to bad likelihood. Thus, we tune $\gamma_\epsilon$ for uniform dequantization (Figure~\ref{fig:uniform_start}), and eventually choose $\gamma_\epsilon=-12.0,-11.9,-11.7,-11.6$ for CIFAR-10 (VP), CIFAR-10 (SP), ImageNet-32 (VP), ImageNet-32 (SP) respectively.
\end{remark}
\section{Equivalence of different predictors and matching objectives}
\label{appendix:equivalence}
We have the following theorem which demonstrates that different predictors are mutually transformable by a time-dependent skip connection, and they can be trained in a simulation-free approach by equivalent matching objectives.
\begin{theorem}
\label{theorem:equivalence}
Let $\x_0$ be the sample from data distribution, and $\epsilonv$ be the sample from $\Nc(\vect0,\Iv)$. Denote $\x_t=\alpha_t\x_0+\sigma_t\epsilonv, \vv=\dot\alpha_t\x_0+\dot\sigma_t\epsilonv$. Suppose we have four kinds of predictors parameterized by $\theta$ and corresponding matching objectives with positive time weighting function $w(t)$: 
\begin{itemize}
    \item score predictor $\sv_\theta(\x_t,t)$ and score matching loss $\Jc_{\SM}(\theta,w(t))=\E_t\left[w(t)\E_{\x_0,\epsilonv}[\|\s_\theta(\x_t,t)-\nabla_{\x}\log q_t(\x_t)\|_2^2]\right]$
    \item noise predictor $\epsilonv_\theta(\x_t,t)$ and noise matching loss $\Jc_{\NM}(\theta,w(t))=\E_t\left[w(t)\E_{\x_0,\epsilonv}[\|\epsilonv_\theta(\x_t,t)-\epsilonv\|_2^2]\right]$
    \item data predictor $\x_\theta(\x_t,t)$ and data matching loss $\Jc_{\DM}(\theta,w(t))=\E_t\left[w(t)\E_{\x_0,\epsilonv}[\|\x_\theta(\x_t,t)-\x_0\|_2^2]\right]$
    \item velocity predictor $\vv_\theta(\x_t,t)$ and flow matching loss $\Jc_{\FM}(\theta,w(t))=\E_t\left[w(t)\E_{\x_0,\epsilonv}[\|\vv_\theta(\x_t,t)-\vv\|_2^2]\right]$
\end{itemize}
For any $w(t)$, if we denote the optimal (ground-truth) predictors that minimize the corresponding matching losses as $\s^*(\x_t,t),\epsilonv^*(\x_t,t),\x^*(\x_t,t),\vv^*(\x_t,t)$ respectively, then they are equivalent by the following relations:
\begin{equation}
\begin{aligned}
\epsilonv^*(\x_t,t)&=-\sigma_t\s^*(\x_t,t)\\
\x^*(\x_t,t)&=\frac{1}{\alpha_t}\x_t+\frac{\sigma_t^2}{\alpha_t}\s^*(\x_t,t)\\
\vv^*(\x_t,t)&=f(t)\x_t-\frac{1}{2}g^2(t)\s^*(\x_t,t)
\end{aligned}
\end{equation}
where $\s^*(\x_t,t)=\nabla_{\x}\log q_t(\x_t)$ is the ground-truth score.
\end{theorem}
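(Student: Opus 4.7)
The plan is to identify each of the four optimal predictors as a conditional expectation under the joint distribution of $(\x_0,\epsilonv)$, and then use the Gaussian structure of the transition kernel $q_{0t}(\x_t|\x_0)=\Nc(\alpha_t\x_0,\sigma_t^2\Iv)$ together with Tweedie's identity to express all of them as affine functions of the true score.

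First I would reduce each matching loss to a pointwise variational problem. Since $w(t)>0$, minimising, say, $\Jc_{\NM}$ is equivalent to minimising the inner expectation at every $t$. Conditioning on $\x_t$ and applying the tower property gives
\begin{equation*}
\E_{\x_0,\epsilonv}\!\left[\|\epsilonv_\theta(\x_t,t)-\epsilonv\|_2^2\right] = \E_{\x_t}\!\left[\|\epsilonv_\theta(\x_t,t)-\E[\epsilonv|\x_t]\|_2^2\right] + \text{const},
\end{equation*}
so the pointwise minimiser is $\epsilonv^*(\x_t,t)=\E[\epsilonv|\x_t]$. The same argument applied to the other three losses yields $\x^*(\x_t,t)=\E[\x_0|\x_t]$, $\vv^*(\x_t,t)=\E[\vv|\x_t]$, and $\s^*(\x_t,t)=\E[\nabla_{\x}\log q_t(\x_t)|\x_t]=\nabla_{\x}\log q_t(\x_t)$, where the last equality just says the conditional expectation of a deterministic function of $\x_t$ is itself.

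Next I would derive Tweedie's formula for this forward process. Writing $q_t(\x_t)=\int q_0(\x_0)\Nc(\x_t;\alpha_t\x_0,\sigma_t^2\Iv)\,d\x_0$ and differentiating under the integral, one gets
\begin{equation*}
\nabla_{\x}\log q_t(\x_t) = -\frac{1}{\sigma_t^2}\E[\x_t-\alpha_t\x_0\,|\,\x_t] = -\frac{1}{\sigma_t}\E[\epsilonv|\x_t].
\end{equation*}
This immediately gives $\epsilonv^*(\x_t,t)=-\sigma_t\s^*(\x_t,t)$. Combining with the identity $\x_0=(\x_t-\sigma_t\epsilonv)/\alpha_t$, which is linear and therefore commutes with conditional expectation, yields
\begin{equation*}
\x^*(\x_t,t) = \frac{1}{\alpha_t}\x_t - \frac{\sigma_t}{\alpha_t}\E[\epsilonv|\x_t] = \frac{1}{\alpha_t}\x_t + \frac{\sigma_t^2}{\alpha_t}\s^*(\x_t,t).
\end{equation*}

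Finally, since $\vv=\dot\alpha_t\x_0+\dot\sigma_t\epsilonv$ is linear in $(\x_0,\epsilonv)$, I would plug in the two expressions above to get
\begin{equation*}
\vv^*(\x_t,t)=\dot\alpha_t\E[\x_0|\x_t]+\dot\sigma_t\E[\epsilonv|\x_t]=\frac{\dot\alpha_t}{\alpha_t}\x_t+\left(\frac{\dot\alpha_t\sigma_t^2}{\alpha_t}-\dot\sigma_t\sigma_t\right)\s^*(\x_t,t),
\end{equation*}
and then use the relations $f(t)=\dot\alpha_t/\alpha_t$ and $g^2(t)=2\sigma_t\dot\sigma_t-2(\dot\alpha_t/\alpha_t)\sigma_t^2$ recalled from the preamble to recognise the coefficient as $f(t)\x_t-\tfrac12 g^2(t)\s^*(\x_t,t)$. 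The only real subtlety is justifying the pointwise minimisation step—specifically, that the joint $(\x_0,\epsilonv)$ average can be reorganised as a conditional $\x_t$ average with a residual variance term independent of the network; this is routine once one writes $(\x_0,\epsilonv)$ as a function of $(\x_t,\x_0)$ via the deterministic coupling $\epsilonv=(\x_t-\alpha_t\x_0)/\sigma_t$, which requires $\sigma_t>0$ (guaranteed away from $t=0$).
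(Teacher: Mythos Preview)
Your proposal is correct and follows essentially the same approach as the paper: reduce each loss to a pointwise MMSE problem so that each optimal predictor is a conditional expectation, then exploit the linear Gaussian structure of $q_{0t}$ to relate all of them to the score via $\nabla_{\x}\log q_t(\x_t)=-\tfrac{1}{\sigma_t}\E[\epsilonv\mid\x_t]$, and finally rewrite the velocity coefficients using $f(t)=\dot\alpha_t/\alpha_t$ and $g^2(t)=2\sigma_t\dot\sigma_t-2(\dot\alpha_t/\alpha_t)\sigma_t^2$. The only cosmetic difference is that you derive the score identity directly via Tweedie (differentiating under the integral), whereas the paper appeals to the denoising score matching equivalence of \citet{vincent2011connection}; the content is the same.
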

\begin{proof}
For any positive weighting $w(t)$, the overall optimum of the matching loss $\E_t\left[w(t)\E_{\x_0,\epsilonv}[\|\cdot\|_2^2]\right]$ is achieved when the optimum of the inner expectation $\E_{\x_0,\epsilonv}[\|\cdot\|_2^2]$ is achieved for any $t$. For fixed $t$, by denoising score matching~\citep{vincent2011connection}, we know minimizing $\E_{\x_0,\epsilonv}[\|\s_\theta(\x_t,t)-\nabla_{\x}\log q_t(\x_t)\|_2^2]$ is equivalent to minimizing $\E_{\x_0,\epsilonv}[\|\s_\theta(\x_t,t)-\nabla_{\x}\log q_{0t}(\x_t|\x_0)\|_2^2]=\E_{q(\x_t)}\E_{q_{t0}(\x_0|\x_t)}[\|\s_\theta(\x_t,t)-\nabla_{\x}\log q_{0t}(\x_t|\x_0)\|_2^2]$, where $\log q_{0t}(\x_t|\x_0)=-\frac{\epsilonv}{\sigma_t}$. The inner expectation is a minimum mean square error problem, so the optimal score predictor satisfies
\begin{equation}
    \s^*(\x_t,t)=\E_{q_{t0}(\x_0|\x_t)}[\nabla_{\x}\log q_{0t}(\x_t|\x_0)]=-\frac{1}{\sigma_t}\E_{q_{t0}(\x_0|\x_t)}[\epsilonv]
\end{equation}

Similarly, for $\Jc_{\NM}(\theta,w(t))$, the optimal noise predictor satisfies
\begin{equation}
    \epsilonv^*(\x_t,t)=\E_{q_{t0}(\x_0|\x_t)}[\epsilonv]=-\sigma_t\s^*(\x_t,t)
\end{equation}

For $\Jc_{\DM}(\theta,w(t))$, the optimal data predictor satisfies
\begin{equation}
\begin{aligned}
\x^*(\x_t,t)&=\E_{q_{t0}(\x_0|\x_t)}[\x_0]\\
&=\E_{q_{t0}(\x_0|\x_t)}\left[\frac{\x_t-\sigma_t\epsilonv}{\alpha_t}\right]\\
&=\frac{1}{\alpha_t}\x_t-\frac{\sigma_t}{\alpha_t}\E_{q_{t0}(\x_0|\x_t)}[\epsilonv]\\
&=\frac{1}{\alpha_t}\x_t+\frac{\sigma_t^2}{\alpha_t}\s^*(\x_t,t)
\end{aligned}
\end{equation}
For $\Jc_{\FM}(\theta,w(t))$, the optimal velocity predictor satisfies
\begin{equation}
\label{eqn:v_in_expectation}
\begin{aligned}
\vv^*(\x_t,t)&=\E_{q_{t0}(\x_0|\x_t)}[\dot\alpha_t\x_0+\dot\sigma_t\epsilonv]\\
&=\dot\alpha_t\E_{q_{t0}(\x_0|\x_t)}[\x_0]+\dot\sigma_t\E_{q_{t0}(\x_0|\x_t)}[\epsilonv]\\
&=\frac{\dot\alpha_t}{\alpha_t}\x_t+\left(\frac{\dot\alpha_t}{\alpha_t}\sigma_t^2-\sigma_t\dot\sigma_t\right)\s^*(\x_t,t)\\
&=f(t)\x_t-\frac{1}{2}g^2(t)\s^*(\x_t,t)
\end{aligned}
\end{equation}
\end{proof}
The equivalence of optimal predictors also implies the equivalence of parameterized predictors. From the above theorem, we know $\vv_\theta(\x_t,t)$ and $\epsilonv_\theta(\x_t,t)$ are related by $\vv_\theta(\x_t,t)=f(t)\x_t+\frac{g^2(t)}{2\sigma_t}\epsilonv_\theta(\x_t,t)$. In practice, we use $\gamma$ timing. From the relationship $\vv_\theta(\x_t,t)=\vv_\theta(\x_\gamma,\gamma)\frac{\dm\gamma}{\dt},\epsilonv_\theta(\x_t,t)=\epsilonv_\theta(\x_\gamma,\gamma)$, we obtain the noise predictor expressed by $\vv_\theta(\x_\gamma,\gamma)$
\begin{equation}
    \epsilonv_\theta(\x_\gamma,\gamma)=2\frac{\vv_\theta(\x_\gamma,\gamma)-\frac{\dot{\alpha}_\gamma}{\alpha_\gamma}\x_\gamma}{\sigma_\gamma}
\end{equation}
Further, we can replace $\vv_\theta(\x_\gamma,\gamma)$ with the normalized velocity predictor $\tilde\vv_\theta(\x_\gamma,\gamma)=\vv_\theta(\x_\gamma,\gamma)/\sqrt{\dot{\alpha}_\gamma^2+\dot{\sigma}_\gamma^2}$. 

Moreover, we can derive the equivalent training objectives under different parameterizations by employing the relations discussed above freely. For example, when we replace the normalized velocity predictor $\tilde\vv_\theta$ with the score predictor $\sv_\theta$ in the second-order objective Eqn.~\eqref{eqn:second_order_objective}, we can obtain the second-order denoising score matching similar to~\citet{lu2022maximum}. However, though theoretically equivalent, the actual performance of these objectives highly depends on the specific model architecture, hyperparameters and parameterization, and the authors of~\citet{lu2022maximum} find that their high-order denoising score matching objectives only work for VE schedule, but degenerate the performance of pretrained models with VP schedule. 

\section{Specifications under VP and SP schedule}
\label{appendix:specification}
As stated in Section~\ref{sec:practical_consideration}, using $\gamma$ timing and normalized velocity predictor $\tilde\vv_\theta$, the likelihood weighted first-order and second-order flow matching objectives are reformulated as:
\begin{equation}
    \label{eqn:fm_objective}
    \Jc_{\FM}=\int_{\gamma_0}^{\gamma_T}2\frac{\dot{\alpha}_\gamma^2+\dot{\sigma}_\gamma^2}{\sigma_\gamma^2}\E_{\x_0,\epsilonv}\|\tilde\vv_\theta(\x_\gamma,\gamma)-\tilde\vv\|_2^2\dm \gamma
\end{equation}
\begin{equation}
    \Jc_{\FM,\tr}=\int_{\gamma_0}^{\gamma_T}2\frac{\dot{\alpha}_\gamma^2+\dot{\sigma}_\gamma^2}{\sigma_\gamma^2}\E_{\x_0,\epsilonv}\left(\sigma_\gamma\tr(\nabla\tilde\vv_\theta)-\frac{\dot{\sigma}_\gamma}{\sqrt{\dot{\alpha}_\gamma^2+\dot{\sigma}_\gamma^2}}d+\frac{2\sqrt{\dot{\alpha}_\gamma^2+\dot{\sigma}_\gamma^2}}{\sigma_\gamma}\|\tilde\vv_\theta(\x_\gamma,\gamma)-\tilde\vv\|_2^2\right)^2\dm\gamma
\end{equation}

where $\vv=\dot\alpha_\gamma\x_0+\dot\sigma_\gamma\epsilonv, \tilde\vv=\vv/\sqrt{\dot\alpha_\gamma^2+\dot\sigma_\gamma^2}$. For VP and SP schedule, since $\gamma=\log(\sigma_\gamma^2/\alpha_\gamma^2)$, using their schedule properties, $\alpha_\gamma,\sigma_\gamma$ are deterministic functions of $\gamma$ without any hyperparameters. Thus, we can derive their specific objectives and equivalent predictors using the formula for general noise schedules. We summarize them in Table~\ref{tab:specification}, where $\hat{\tilde\vv}_\theta$ denotes the stop-gradient version of $\tilde\vv_\theta$.

\begin{table}[ht]
\label{tab:specification}
    \vspace{-.1in}
    \centering
    \caption{Specification of related values and objectives under VP and SP schedule.}
    \vskip 0.1in
    \resizebox{\textwidth}{!}{%
    \begin{tabular}{lcc}
    \toprule
    Formula&VP&SP\\
    \midrule
        $\alpha_\gamma$&$\displaystyle\sqrt{\frac{1}{1+\exp(\gamma)}}$&$\displaystyle\frac{1}{1+\exp(\gamma/2)}$\\
        $\sigma_\gamma$&$\displaystyle\sqrt{\frac{1}{1+\exp(-\gamma)}}$&$\displaystyle\frac{1}{1+\exp(-\gamma/2)}$\\
        $\dot\alpha_\gamma$&$\displaystyle-\frac{1}{2}\alpha_\gamma\sigma_\gamma^2$&$\displaystyle-\frac{1}{2}\alpha_\gamma\sigma_\gamma$\\
        $\dot\sigma_\gamma$&$\displaystyle\frac{1}{2}\alpha_\gamma^2\sigma_\gamma$&$\displaystyle\frac{1}{2}\alpha_\gamma\sigma_\gamma$\\
        $\sqrt{\dot\alpha_\gamma^2+\dot\sigma_\gamma^2}$&$\displaystyle\frac{1}{2}\alpha_\gamma\sigma_\gamma$&$\displaystyle\frac{1}{\sqrt{2}}\alpha_\gamma\sigma_\gamma$\\
        $\tilde\vv$&$\alpha_\gamma\epsilonv-\sigma_\gamma\x_0$&$\displaystyle\frac{\epsilonv-\x_0}{\sqrt{2}}$\\
        $\Jc_{\FM}$&$\displaystyle\frac{1}{2}\int_{\gamma_0}^{\gamma_T}\alpha_\gamma^2\E_{\x_0,\epsilonv}\|\tilde\vv_\theta(\x_\gamma,\gamma)-\tilde\vv\|_2^2\dm \gamma$&$\displaystyle\int_{\gamma_0}^{\gamma_T}\alpha_\gamma^2\E_{\x_0,\epsilonv}\left\|\tilde\vv_\theta(\x_\gamma,\gamma)-\tilde\vv\right\|_2^2\dm \gamma$\\
       $\Jc_{\FM,\tr}$&$\displaystyle\frac{1}{2}\int_{\gamma_0}^{\gamma_T} \alpha_\gamma^2\E_{\x_0,\epsilonv}\left(\sigma_\gamma\tr(\nabla \tilde\vv_\theta)-\alpha_\gamma d+\alpha_\gamma\|\hat{\tilde\vv}_\theta-\tilde\vv\|_2^2\right)^2\dm\gamma$&$\displaystyle\int_{\gamma_0}^{\gamma_T}\alpha_\gamma^2\E_{\x_0,\epsilonv}\left(\sigma_\gamma\tr(\nabla \tilde\vv_\theta)-\frac{1}{\sqrt{2}} d+\sqrt{2}\alpha_\gamma\|\hat{\tilde\vv}_\theta-\tilde\vv\|_2^2\right)^2\dm\gamma$\\
       $\epsilonv_\theta(\x_\gamma,\gamma)$&$\sigma_\gamma\x_\gamma+\alpha_\gamma \tilde\vv_\theta(\x_\gamma,\gamma)$&$\x_\gamma+\sqrt{2}\alpha_\gamma \tilde\vv_\theta(\x_\gamma,\gamma)$\\
    \bottomrule
    \end{tabular}%
    }
    \vspace{-0.15in}
\end{table}

Next, we derive the designed IS procedure. We want to choose a proposal distribution $p(\gamma)\propto\frac{\dot{\alpha}_\gamma^2+\dot{\sigma}_\gamma^2}{\sigma_\gamma^2}$, which is proportional $\alpha_\gamma^2$ for VP and SP. Since we have explicit expressions for the density, we utilize \textit{inverse transform sampling} to design a sampling procedure. Concretely, we take uniform samples of a number $t\in [0,1]$, and solve the following equation about $\gamma_t$:
\begin{equation}
    \frac{1}{Z}\int_{\gamma_0}^{\gamma_t}\alpha_\gamma^2\dm\gamma=t,\quad Z=\int_{\gamma_0}^{\gamma_1}\alpha_\gamma^2\dm\gamma
\end{equation}
Here we assume maximum time $T=1$, and $Z$ is a normalizing constant.
\paragraph{VP}
We have (omit the constant of the indefinite integral)
\begin{equation}
\int \alpha_\gamma^2\dm\gamma=\log\frac{1}{1+\exp(-\gamma)}=\log\alpha_\gamma^2
\end{equation}
Then the equation for inverse transform sampling is
\begin{equation}
    \log\frac{1}{1+\exp(-\gamma_t)}-\log\alpha_{\gamma_0}^2=Zt,\quad Z=\log\frac{\sigma_{\gamma_1}^2}{\sigma_{\gamma_0}^2}
\end{equation}
The solution has a closed-form expression, which gives the inverse transformation from $t$ to $\gamma$
\begin{equation}
    \gamma_t=\log\frac{1}{\exp(-Zt)/\sigma_{\gamma_0}^2 -1},\quad t\sim \Uc(0,1)
\end{equation}

\paragraph{SP}
We have (omit the constant of the indefinite integral)
\begin{equation}
    \int\alpha_\gamma^2\dm \gamma=-2\left(\log(1+\exp(-\gamma/2))+\frac{1}{1+\exp(-\gamma/2)}\right)
\end{equation}
Denote $F(\gamma)=-\log(1+\exp(-\gamma/2))-(1+\exp(-\gamma/2))^{-1}$, then the equation for inverse transform sampling is
\begin{equation}
    \frac{F(\gamma_t)-F(\gamma_0)}{F(\gamma_1)-F(\gamma_0)}=t
\end{equation}
The solution has no closed-form expressions. Similar to the implementation in \citet{song2021maximum}, we use the bisection method to find the root.

\section{Illustration of velocity prediction and imbalance problem}
\label{appendix:v_pred_interpretation}
\begin{figure}[ht]
	\centering
	\begin{minipage}{.48\linewidth}
		\centering
			\begin{tikzpicture}[x=0.6pt,y=0.6pt,yscale=-1,xscale=1.0]

\draw   (60,130) .. controls (60,78.09) and (71.24,36) .. (85.1,36) .. controls (98.96,36) and (110.2,78.09) .. (110.2,130) .. controls (110.2,181.91) and (98.96,224) .. (85.1,224) .. controls (71.24,224) and (60,181.91) .. (60,130) -- cycle ;
\draw   (296,130) .. controls (296,78.09) and (307.24,36) .. (321.1,36) .. controls (334.96,36) and (346.2,78.09) .. (346.2,130) .. controls (346.2,181.91) and (334.96,224) .. (321.1,224) .. controls (307.24,224) and (296,181.91) .. (296,130) -- cycle ;
\draw  [dash pattern={on 0.84pt off 2.51pt}]  (83.2,112) .. controls (123.2,82) and (261.2,71) .. (327.2,158) ;
\draw [shift={(327.2,158)}, rotate = 52.82] [color={rgb, 255:red, 0; green, 0; blue, 0 }  ][fill={rgb, 255:red, 0; green, 0; blue, 0 }  ][line width=0.75]      (0, 0) circle [x radius= 3.35, y radius= 3.35]   ;
\draw [shift={(219.98,94.67)}, rotate = 190.55] [color={rgb, 255:red, 0; green, 0; blue, 0 }  ][line width=0.75]    (10.93,-3.29) .. controls (6.95,-1.4) and (3.31,-0.3) .. (0,0) .. controls (3.31,0.3) and (6.95,1.4) .. (10.93,3.29)   ;
\draw [shift={(83.2,112)}, rotate = 323.13] [color={rgb, 255:red, 0; green, 0; blue, 0 }  ][fill={rgb, 255:red, 0; green, 0; blue, 0 }  ][line width=0.75]      (0, 0) circle [x radius= 3.35, y radius= 3.35]   ;
\draw [color={rgb, 255:red, 0; green, 0; blue, 0 }  ,draw opacity=1 ][line width=0.75]    (132,93) -- (191.23,82.35) ;
\draw [shift={(193.2,82)}, rotate = 169.81] [color={rgb, 255:red, 0; green, 0; blue, 0 }  ,draw opacity=1 ][line width=0.75]    (10.93,-3.29) .. controls (6.95,-1.4) and (3.31,-0.3) .. (0,0) .. controls (3.31,0.3) and (6.95,1.4) .. (10.93,3.29)   ;
\draw [shift={(132,93)}, rotate = 349.81] [color={rgb, 255:red, 0; green, 0; blue, 0 }  ,draw opacity=1 ][fill={rgb, 255:red, 0; green, 0; blue, 0 }  ,fill opacity=1 ][line width=0.75]      (0, 0) circle [x radius= 3.35, y radius= 3.35]   ;

\draw (61,7.4) node [anchor=north west][inner sep=0.75pt]    {$\mathbf{x}_{0} \sim q_{0}$};
\draw (297,7.4) node [anchor=north west][inner sep=0.75pt]    {$\epsilonv \sim \N(\mathbf{0},\Iv)$};
\draw (137,54.4) node [anchor=north west][inner sep=0.75pt]    {$\vv=\partial_t \x_t(\x_{0} ,\epsilonv ) =\dot{\alpha }_{t}\x_{0} +\dot{\sigma }_{t}\epsilonv$};
\draw (122,112.4) node [anchor=north west][inner sep=0.75pt]    {$\x_t =\alpha _{t}\x_{0} +\sigma _{t}\epsilonv$};
\end{tikzpicture}\\
\small{(a) Illustration of velocity prediction. Left ellipse: $\x_0$ sampled from the data distribution. Right ellipse: $\epsilonv$ sampled from standard Gaussian distribution. By independently drawing a pair $(\x_0,\epsilonv)$, we can construct a diffusion path using the noise schedule.}
	\end{minipage}
	\begin{minipage}{.48\linewidth}
	\centering
	\includegraphics[width=0.8\linewidth]{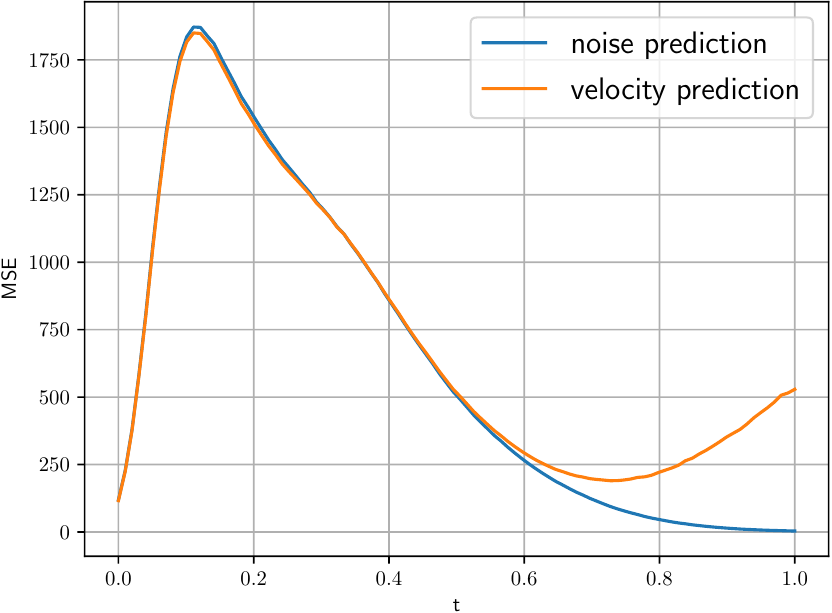}\\
\small{	(b) Mean square loss at different time $t$. We plot $\E_{\x_0,\epsilonv}\left[\|\epsilonv_\theta(\x_t,t)-\epsilonv\|_2^2\right]$ and $\E_{\x_0,\epsilonv}\left[\|\tilde\vv_\theta(\x_t,t)-\tilde\vv\|_2^2\right]$ for noise and velocity prediction on our pretrained model, tested on 32 data samples $\x_0$ and 20 noise samples $\epsilonv$.}
\end{minipage}
	\caption{\label{fig:v_imbalance}Illustration of velocity prediction and imbalance problem.}
\end{figure}

First, we give an intuitive illustration of our velocity parameterization and corresponding flow matching objective in Section~\ref{sec:v_param}. As shown in Figure~\ref{fig:v_imbalance}(a), for each pair $(\x_0, \epsilonv)$ where $\x_0\sim q_0(\x_0)$ and $\epsilonv\sim \N(\vect{0},\Iv)$, let $\x_t = \alpha_t\x_0 + \sigma_t\epsilonv$. As $t$ increases, $\x_t$ moves from $\x_0$ to $\epsilonv$ gradually, forming a diffusion path in the sample space, and $\vv$ is the velocity $\frac{\partial\x_t(\x_0,\epsilonv)}{\partial t}$ across the path. Thus, minimizing $\Jc_{\text{FM}}$ is to predict the expected velocity for all possible $(\x_0,\epsilonv)$ pairs.

Next, we interpret the superiority of velocity prediction from the perspective of balanced prediction difficulty. Intuitively, the noise prediction model suffers from an imbalance problem: at small $t$, $\x_t$ is similar to data, and extracting the insignificant noise component is hard; at large $t$, $\x_t$ is similar to noise, so the noise prediction is easy and has a small error. Velocity prediction, on the other hand, has a property that the prediction target $\vv$ is less relevant to input $\x_t$. In Fig.~\ref{fig:v_imbalance}(b) we empirically confirm it on our pretrained model. We plot the mean square prediction error (MSE) w.r.t. time $t$, which shows that velocity prediction alleviates the imbalance problem by enlarging the training at large $t$. Since the overall error is a weighted combination of the MSE at different $t$ and is invariant to the parameterization,
we can conclude that under noise prediction, the MSE is lower near $t=1$, but is imposed a larger weight, so it has a larger gradient variance.

\section{Relationship between velocity parameterization and other works}
\label{appendix:relationship}

In this section, we demonstrate how the techniques in related works~\citep{karras2022elucidating,lipman2022flow,salimans2022progressive,ho2022imagen} can be reformulated as velocity parameterization.
\subsection{Interpretation by preconditioning}
Works that aim at improving the sample quality of diffusion models also consider the network parameterizations that adaptively mix signal and noise. \citet{karras2022elucidating} proposes to precondition the neural network with a time-dependent skip connection that allows it to estimate either data $\x_0$ or noise $\epsilonv$, or something in between. Similarly, we write the noise predictor $\epsilonv_\theta(\cdot)$ in the following formulation:
\begin{equation}
    \epsilonv_\theta(\x_\gamma,\gamma)=c_{\text{skip}}(\gamma)\x_\gamma+c_{\text{out}}(\gamma)F_\theta(c_{\text{in}}(\gamma)\x_\gamma,\gamma)
\end{equation}
where $F_\theta(\cdot)$ is the pure network, $\x_\gamma=\alpha_\gamma\x_0+\sigma_\gamma\epsilonv$. The flow matching loss can be rewritten as
\begin{equation}
\begin{aligned}
    \Jc_{\FM}(\theta)&=\frac{1}{2}\int_{\gamma_0}^{\gamma_T}\E_{\x_0,\epsilonv}\left[\|\epsilonv_\theta(\x_\gamma,\gamma)-\epsilonv\|_2^2\right]\\
    &=\frac{1}{2}\int_{\gamma_0}^{\gamma_T}\E_{\x_0,\epsilonv}\left[\|c_{\text{skip}}(\gamma)\x_\gamma+c_{\text{out}}(\gamma)F_\theta(c_{\text{in}}(\gamma)\x_\gamma,\gamma)-\epsilonv\|_2^2\right]\\
    &=\frac{1}{2}\int_{\gamma_0}^{\gamma_T}\E_{\x_0,\epsilonv}\left[c_{\text{out}}(\gamma)^2\|F_\theta(c_{\text{in}}(\gamma)\x_\gamma,\gamma)-F_\text{target}(\x_0,\epsilonv,\gamma)\|_2^2\right]
\end{aligned}
\end{equation}
where
\begin{equation}
    F_\text{target}(\x_0,\epsilonv,\gamma)=\frac{\epsilonv-c_{\text{skip}}(\gamma)\x_\gamma}{c_{\text{out}}(\gamma)}
\end{equation}

Following first principles in EDM, We derive formulas for $c_{\text{in}}(\gamma),c_{\text{out}}(\gamma),c_{\text{skip}}(\gamma)$ to ensure:
\begin{enumerate}
    \item The training inputs of $F_\theta(\cdot)$ have unit variance.
    \item The effective training target $F_\text{target}$ has unit variance.
    \item We select $c_{\text{skip}}(\gamma)$ to minimize $c_{\text{out}}(\gamma)$, so that the errors of $F_\theta$ are amplified as little as possible.
\end{enumerate}

From principle 1, we have
\begin{equation}
    \begin{aligned}
    1&=\Var\left[c_{\text{in}}(\gamma)\x_\gamma\right]\\
    1&=\Var\left[c_{\text{in}}(\gamma)(\alpha_\gamma\x_0+\sigma_\gamma\epsilonv)\right]\\
    1&=c_{\text{in}}^2(\gamma)(\alpha_\gamma^2\sigma_{\text{data}}^2+\sigma_\gamma^2)\\
    c_{\text{in}}(\gamma)&=\frac{1}{\sqrt{\sigma_\gamma^2+\sigma_{\text{data}}^2\alpha_\gamma^2}}
    \end{aligned}
\end{equation}

From principle 2, we have
\begin{equation}
    \label{eqn:principle_2}
    \begin{aligned}
    1&=\Var\left[F_\text{target}(\x_0,\epsilonv,\gamma)\right]\\
    1&=\Var\left[\frac{\epsilonv-c_{\text{skip}}(\gamma)\x_\gamma}{c_{\text{out}}(\gamma)}\right]\\
    c_{\text{out}}^2(\gamma)&=\Var\left[\epsilonv-c_{\text{skip}}(\gamma)\x_\gamma\right]\\
    c_{\text{out}}^2(\gamma)&=\Var\left[\epsilonv-c_{\text{skip}}(\gamma)(\alpha_\gamma\x_0+\sigma_\gamma\epsilonv)\right]\\
    c_{\text{out}}^2(\gamma)&=\Var\left[(1-c_{\text{skip}}(\gamma)\sigma_\gamma)\epsilonv-c_{\text{skip}}(\gamma)\alpha_\gamma\x_0\right]\\
    c_{\text{out}}^2(\gamma)&=(1-c_{\text{skip}}(\gamma)\sigma_\gamma)^2+c^2_{\text{skip}}(\gamma)\alpha_\gamma^2\sigma_{\text{data}}^2
    \end{aligned}
\end{equation}

From principle 3, we have
\begin{equation}
\label{eqn:principle_3}
    \begin{aligned}
    0&=\frac{\dm c_{\text{out}}^2(\gamma)}{\dm c_{\text{skip}}(\gamma)}\\
    0&=-2\sigma_\gamma(1-\sigma_\gamma c_{\text{skip}}(\gamma))+2\alpha_\gamma^2\sigma_{\text{data}}^2c_{\text{skip}}(\gamma)\\
    c_{\text{skip}}(\gamma)&=\frac{\sigma_\gamma}{\sigma_\gamma^2+\sigma_{\text{data}}^2\alpha_\gamma^2}
    \end{aligned}
\end{equation}

We now substitute Eqn.~\eqref{eqn:principle_3} into Eqn.~\eqref{eqn:principle_2} to obtain the formula for $c_{\text{out}}(\gamma)$:
\begin{equation}
    c_{\text{out}}(\gamma)=\frac{\sigma_{\text{data}}\alpha_\gamma}{\sqrt{\sigma_\gamma^2+\sigma_{\text{data}}^2\alpha_\gamma^2}}
\end{equation}

If we assume $\sigma_{\text{data}}=1$ and consider VP schedule, we have $\alpha_\gamma^2+\sigma_\gamma^2=1$, and the coefficients are reduced to
\begin{equation}
    c_{\text{in}}(\gamma)=1,\quad c_{\text{skip}}(\gamma)=\sigma_\gamma,\quad c_{\text{out}}(\gamma)=\alpha_\gamma
\end{equation}
In this case, the preconditioning is in agreement with our velocity parameterization by $\tilde{\vv}_\theta(\x_\gamma,\gamma)=F_\theta(\x_\gamma,\gamma)$. In practice, we find setting $\sigma_{\text{data}}=0.5$ as in~\citet{karras2022elucidating} leads to faster descent of the loss at the start, but slower convergence as the training proceeds. 

\subsection{Connection to flow matching in \citet{lipman2022flow}}
\label{appendix:comparison_flow_matching}

\citet{lipman2022flow} defines a conditional probability path $p_t(\x|\x_0)$ that gradually moves the data $\x_0\sim q(\x_0)$ to a target distribution $p_1(\x)$. Note that they use $t=1$ to represent data distribution and $t=0$ to represent target distribution. To be consistent, we reverse their time representation. They obtain the marginal probability path by marginalizing over $q(\x_0)$:
\begin{equation}
    p_t(\x)=\int p_t(\x|\x_0)q(\x_0)\dm \x_0
\end{equation}
They want to learn a vector field $\vv_t(\x)$, which defines a flow $\phiv: [0,1]\times \R^d\rightarrow\R^d$ by
\begin{equation}
    \frac{\dm}{\dt}\phiv_t(\x)=\vv_t(\phiv_t(\x)),\quad \phiv_1(\x)=\x
\end{equation}
so that the marginal $p_t$ can be generated by the push-forward $p_t=[\phiv_t]_* p_1$. In practice, they consider the Gaussian conditional
probability paths
\begin{equation}
    p_t(\x|\x_0)=\Nc(\x|\muv_t(\x_0),\sigma_t^2(\x_0)\Iv)
\end{equation}
and propose a conditional flow matching (CFM) objective for simulation-free training of $\vv_t(\x)$
\begin{equation}
    \Lc_{\CFM}(\theta)=\E_{t,q(\x_0),p_t(\x|\x_0)}\|\vv_t(\x)-\uv_t(\x|\x_0)\|_2^2
\end{equation}
where
\begin{equation}
    \uv_t(\x|\x_0)=\frac{\sigma'_t(\x_0)}{\sigma_t(\x_0)}(\x-\muv_t(\x_0))+\muv'_t(\x_0)
\end{equation}

Suppose the mean $\muv_t(\x_0)$ is linear to $\x_0$, and the standard deviation $\sigma_t(\x_0)$ is invariant to $\x_0$, as the two experimented cases in flow matching. By setting $\muv_t(\x_0)=\alpha_t\x_0,\sigma_t(\x_0)=\sigma_t$, we have
\begin{equation}
    \uv_t(\x|\x_0)=\frac{\dot\sigma_t}{\sigma_t}(\x-\alpha_t\x_0)+\dot\alpha_t\x_0=\dot\alpha_t\x_0+\dot\sigma_t\epsilonv
\end{equation}
where we use $\x=\alpha_t\x_0+\sigma_t\epsilonv,\epsilonv\sim\Nc(\vect0,\Iv)$ since $p_t(\x|\x_0)=\Nc(\x|\alpha_t\x_0,\sigma_t^2\Iv)$. Then we can observe that they are corresponding to our notations: the conditional probability path $p_t(\x|\x_0)$ corresponds to the Gaussian transition kernel $q_{0t}(\x_t|\x_0)$ of the forward diffusion process; the marginal probability path $p_t(\x)$ corresponds to the ground-truth marginals $q_t(\x_t)$ associated with the forward diffusion process; the matching target $\uv_t(\x|\x_0)$ in CFM corresponds to the velocity of the diffusion path $\vv=\dot\alpha_t\x_0+\dot\sigma_t\epsilonv$ in our formulation.

Therefore, the CFM objective in \citet{lipman2022flow} is actually velocity parameterization when specific to Gaussian diffusion processes, which is similar to our first-order objective of the pretraining phase. We can express CFM in a simpler form, which is easier to analyze and generalize to any noise schedule. Then by the equivalence of different predictors (Theorem~\ref{theorem:equivalence}) and the relationship between $f(t),g(t)$ and $\alpha_t,\sigma_t$, we have

\begin{equation}
\begin{aligned}
\Lc_{\CFM}(\theta)&=\int_{0}^{T}\E_{\x_0,\epsilonv}\|\vv_\theta(\x_t,t)-\vv\|_2^2\dm t\\
&=\int_{0}^{T}\E_{\x_0,\epsilonv}\left\|f(t)\x_t-\frac{1}{2}g^2(t)\sv_\theta(\x_t,t)-(\dot\alpha_t\x_0+\dot\sigma_t\epsilonv)\right\|_2^2\dm t\\
&=\int_{0}^{T}\frac{1}{4}g^4(t)\E_{\x_0,\epsilonv}\left\|\sv_\theta(\x_t,t)+\frac{\epsilonv}{\sigma_t}\right\|\dm t
\end{aligned}
\end{equation}
which demonstrates that \textbf{the CFM objective not only changes the parameterization but also imposes a different time weighting $w(t)=\frac{1}{4}g^4(t)$ on the original denoising score matching objective}. When the training aims to improve the sample quality (e.g., FID), the optimal choice for $w(t)$ is still an open problem.

Comparing the CFM objective to our first-order objective Eqn.~\eqref{eqn:fm_objective}, the practical differences are that we use normalized predictor $\tilde{\vv}_\theta$, $\gamma$ timing, and apply likelihood weighting. The likelihood weighting refers to time weighting $w(t)=\frac{g^2(t)}{2\sigma_t^2}$ in Eqn.~\eqref{eqn:sm_raw} and $w(t)=\frac{2}{g^2(t)}$ in Eqn.~\eqref{eqn:fm_1}, which is consistent under different parameterizations and is the theoretically optimal choice for maximum likelihood training~\cite{song2020score}. Also, changing the time domain from $t$ to $\gamma$ will not alter the value of the objective, but will affect the variance of Monte-Carlo estimation and the convergence speed, as we have discussed. For example, the OT path in~\citet{lipman2022flow} is $\alpha_t=1-t,\sigma_t=1-(1-\sigma_{\min})(1-t)\approx t$, and the relation between $\gamma$ and $t$ is $\gamma=\log(\sigma_t^2/\alpha_t^2)=2\log(t/(1-t))$. Under $\gamma$ timing, we can decouple the choice of noise schedules to the greatest extent, and regard the change of variable from $\gamma$ to $t$ as a tunable importance sampling procedure.

Besides, normalizing the field is necessary for stable training of the velocity predictor and is the key to unifying v prediction and preconditioning. Such strategies have also been adopted in more general physics-inspired generative models. For example, \citet{xu2022poisson,xu2023pfgm++} propose to normalize the Poission field when training Poisson flow generative models.

\subsection{Connection to v prediction}

In \citet{salimans2022progressive,ho2022imagen}, a technique called ``v prediction" is used, which parameterizes a network to predict $\mathbf{v}=\alpha_t\epsilonv-\sigma_t\x_0$. Assuming a VP schedule following their choice, we have $\alpha_t^2+\sigma_t^2=1$, so by taking the derivative w.r.t. $t$ we have $\alpha_t\dot\alpha_t+\sigma_t\dot\sigma_t=0$, then
\begin{equation}
\dot\alpha_t=-\frac{\sigma_t\dot\sigma_t}{\alpha_t},\quad \frac{\dm\log \alpha_t}{\dm t}=\frac{\dot\alpha_t}{\alpha_t}=-\frac{\sigma_t\dot\sigma_t}{\alpha_t^2}
\end{equation}
so
\begin{equation}
\begin{aligned}
g^2(t)&=\frac{\dm \sigma_t^2}{\dm t}-2\frac{\dm\log \alpha_t}{\dm t}\sigma_t^2\\
&=2\sigma_t\dot\sigma_t+2\frac{\sigma_t\dot\sigma_t}{\alpha_t^2}\sigma_t^2\\
&=\frac{2\sigma_t\dot\sigma_t}{\alpha_t^2}(\alpha_t^2+\sigma_t^2)\\
&=\frac{2\sigma_t\dot\sigma_t}{\alpha_t^2}
\end{aligned}
\end{equation}
and the velocity is
\begin{equation}
\begin{aligned}
\vv&=\dot{\alpha }_{t}\x_0 +\dot{\sigma }_{t}\epsilonv\\
&=\dot{\sigma }_{t}\epsilonv-\frac{\sigma_t\dot\sigma_t}{\alpha_t}\x_0\\
&=\frac{\dot\sigma_t}{\alpha_t}(\alpha_t\epsilonv-\sigma_t\x_0)\\
&=\frac{\alpha_t}{2\sigma_t}g^2(t)(\alpha_t\epsilonv-\sigma_t\x_0)
\end{aligned}
\end{equation}
Besides, we can compute the normalizing factor as
\begin{equation}
    \sqrt{\dot\alpha_t^2+\dot\sigma_t^2}=\sqrt{\frac{\sigma_t^2\dot\sigma_t^2}{\alpha_t^2}+\dot\sigma_t^2}=\frac{\dot\sigma_t}{\alpha_t}=\frac{\alpha_t}{2\sigma_t}g^2(t)
\end{equation}
so we have the normalized velocity
\begin{equation}
    \tilde{\vv}=\frac{\vv}{\sqrt{\dot\alpha_t^2+\dot\sigma_t^2}}=\alpha_t\epsilonv-\sigma_t\x_0
\end{equation}

Therefore, $\mathbf v=\tilde{\vv}$, which means that v prediction is a special case of velocity parameterization when the noise schedule is VP.
\section{Error-bounded trace of second-order flow matching}
\label{appendix:trace}
Here we provide the proofs for the error-bounded trace of second-order flow matching. First, we provide a lemma that gives the Jacobian of the ground-truth velocity predictor $\vv^*(\x_t,t)$.
\begin{lemma}
Suppose $(\x_0,\x_t)\sim q(\x_0,\x_t)$, denote $\x_t=\alpha_t\x_0+\sigma_t\epsilonv,\vv=\dot\alpha_t\x_0+\dot\sigma_t\epsilonv,\nabla(\cdot)=\nabla_{\x_t}(\cdot)$, we have
\begin{equation}
    \nabla\vv^*(\x_t,t)=\frac{\dot\sigma_t}{\sigma_t}\Iv-\frac{2}{g^2(t)}\E_{q_{t0}(\x_0|\x_t)}\left[(\vv^*(\x_t,t)-\vv)(\vv^*(\x_t,t)-\vv)^\top\right]
\end{equation}
and
\begin{equation}
    \tr(\nabla\vv^*(\x_t,t))=\frac{\dot\sigma_t}{\sigma_t}d-\frac{2}{g^2(t)}\E_{q_{t0}(\x_0|\x_t)}\left[\|\vv^*(\x_t,t)-\vv\|_2^2\right]
\end{equation}
\label{lemma:trace}
\end{lemma}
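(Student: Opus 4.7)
The plan is to start from the parameterization of $\vv^*$ in terms of the ground-truth score and differentiate in $\x_t$. By the optimal-predictor identity established in Theorem~\ref{theorem:equivalence} (equation \eqref{eqn:v_in_expectation}), we have $\vv^*(\x_t,t)=f(t)\x_t-\tfrac{1}{2}g^2(t)\nabla_{\x}\log q_t(\x_t)$, so $\nabla\vv^*(\x_t,t)=f(t)\Iv-\tfrac{1}{2}g^2(t)\nabla^2\log q_t(\x_t)$. The task is thus reduced to computing the Hessian of $\log q_t$.

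Next I would represent the score as a conditional expectation via the denoising identity: since $\nabla_{\x_t}\log q_{0t}(\x_t|\x_0)=-\epsilonv/\sigma_t$, writing $\nabla q_t$ as an integral over $q_0(\x_0)q_{0t}(\x_t|\x_0)$ gives $\nabla\log q_t(\x_t)=-\tfrac{1}{\sigma_t}\E_{q_{t0}(\x_0|\x_t)}[\epsilonv]$, where $\epsilonv=(\x_t-\alpha_t\x_0)/\sigma_t$. To differentiate this conditional expectation in $\x_t$ I would use the log-derivative trick on the posterior, namely $\nabla q_{t0}(\x_0|\x_t)=q_{t0}(\x_0|\x_t)\bigl(\nabla\log q_{0t}(\x_t|\x_0)-\nabla\log q_t(\x_t)\bigr)$. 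Splitting $\nabla\E[\epsilonv|\x_t]$ into a direct contribution from $\epsilonv$ depending on $\x_t$ (giving $\tfrac{1}{\sigma_t}\Iv$) and the posterior-derivative contribution (giving $-\tfrac{1}{\sigma_t}\Cov[\epsilonv|\x_t]$) yields $\nabla\E[\epsilonv|\x_t]=\tfrac{1}{\sigma_t}\bigl(\Iv-\Cov[\epsilonv|\x_t]\bigr)$ and hence $\nabla^2\log q_t(\x_t)=-\tfrac{1}{\sigma_t^2}\bigl(\Iv-\Cov[\epsilonv|\x_t]\bigr)$.

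The remaining step is to convert the covariance of $\epsilonv$ into the covariance of $\vv$ stated in the lemma. Conditional on $\x_t$, the relation $\x_0=(\x_t-\sigma_t\epsilonv)/\alpha_t$ makes $\vv=\dot\alpha_t\x_0+\dot\sigma_t\epsilonv$ an affine function of $\epsilonv$ with scalar slope $\dot\sigma_t-\dot\alpha_t\sigma_t/\alpha_t$. Using the standard noise-schedule identity $g^2(t)=\tfrac{d\sigma_t^2}{dt}-2\tfrac{d\log\alpha_t}{dt}\sigma_t^2$, this slope simplifies to $g^2(t)/(2\sigma_t)$, so $\Cov[\vv|\x_t]=\tfrac{g^4(t)}{4\sigma_t^2}\Cov[\epsilonv|\x_t]$. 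Since $\vv^*=\E[\vv|\x_t]$, this covariance is exactly $\E_{q_{t0}}[(\vv^*-\vv)(\vv^*-\vv)^\top]$.

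Finally I would combine everything. Substituting back gives $\nabla\vv^*=\bigl(f(t)+\tfrac{g^2(t)}{2\sigma_t^2}\bigr)\Iv-\tfrac{2}{g^2(t)}\E_{q_{t0}}[(\vv^*-\vv)(\vv^*-\vv)^\top]$, and one last algebraic check using the same noise-schedule identity shows $f(t)+\tfrac{g^2(t)}{2\sigma_t^2}=\dot\sigma_t/\sigma_t$, which matches the claim. The trace formula is then immediate since $\tr(\Cov[\vv|\x_t])=\E[\|\vv^*-\vv\|_2^2\mid\x_t]$. The main technical obstacle is the differentiation of the conditional expectation $\E[\epsilonv|\x_t]$: one must carefully treat $\epsilonv$ as the deterministic map $(\x_t-\alpha_t\x_0)/\sigma_t$ under the integral so that both the integrand and the posterior density contribute, and keep the score-subtraction term arising from the log-derivative trick. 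All remaining steps are routine manipulations of the VP-style noise-schedule identities.
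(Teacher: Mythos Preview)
Your proposal is correct, and it shares with the paper's proof the same core ingredient: differentiating a posterior expectation via the log-derivative identity $\nabla q_{t0}(\x_0|\x_t)=q_{t0}(\x_0|\x_t)\bigl(\nabla\log q_{0t}(\x_t|\x_0)-\nabla\log q_t(\x_t)\bigr)$. The organization differs, however. The paper works entirely in $\vv$-coordinates: it differentiates $\vv^*(\x_t,t)=\E_{q_{t0}(\x_0|\x_t)}[\vv]$ directly, observes that for fixed $\x_0$ one has $\nabla\vv=\frac{\dot\sigma_t}{\sigma_t}\Iv$ (giving the identity coefficient at once), rewrites $\nabla q_{t0}$ as $\frac{2}{g^2(t)}(\vv^*-\vv)q_{t0}$, and then centers the resulting outer product using $\E_{q_{t0}}[\vv^*-\vv]=\vect 0$. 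Your route instead detours through the score: you write $\nabla\vv^*=f(t)\Iv-\tfrac{1}{2}g^2(t)\nabla^2\log q_t$, compute the Hessian in $\epsilonv$-coordinates as the second-order Tweedie formula $\nabla^2\log q_t=-\tfrac{1}{\sigma_t^2}(\Iv-\Cov[\epsilonv|\x_t])$, convert $\Cov[\epsilonv|\x_t]$ to $\Cov[\vv|\x_t]$ via the affine relation with slope $g^2(t)/(2\sigma_t)$, and close with the algebraic identity $f(t)+\tfrac{g^2(t)}{2\sigma_t^2}=\dot\sigma_t/\sigma_t$. The paper's version is a bit shorter and avoids both the $\epsilonv\!\to\!\vv$ covariance conversion and the final scalar simplification; your version has the merit of making explicit the connection to the familiar score-Hessian/posterior-covariance identity, which readers may recognize from Tweedie-type calculations.
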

\begin{proof}
First, the gradient of $q_{t0}$ can be calculated as
\begin{equation}
\begin{aligned}
\nabla q_{t0}(\x_0|\x_t)&=\nabla\frac{q_0(\x_0)q_{0t}(\x_t|\x_0)}{q_t(\x_t)}\\
    &=q_0(\x_0)\frac{q_t(\x_t)\nabla q_{0t}(\x_t|\x_0)-q_{0t}(\x_t|\x_0)\nabla q_t(\x_t)}{q_t(\x_t)^2}\\
    &=\frac{q_0(\x_0)q_{0t}(\x_t|\x_0)}{q_t(\x_t)}\left(\nabla\log q_{0t}(\x_t|\x_0)-\nabla\log q_t(\x_t)\right)\\
    &=q_{t0}(\x_0|\x_t)\left(\nabla\log q_{0t}(\x_t|\x_0)-\nabla\log q_t(\x_t)\right)\\
&=\frac{2}{g^2(t)}(\vv^*(\x_t,t)-\vv)q_{t0}(\x_0|\x_t)
\end{aligned}
\end{equation}
where we use the relation between $\vv^*(\x_t,t)$ and $\nabla\log q_t(\x_t)$ in Theorem~\ref{theorem:equivalence}. From Eqn.~\eqref{eqn:v_in_expectation}, we know $\vv^*(\x_t,t)=\E_{q_{t0}(\x_0|\x_t)}[\vv]$, and for given $\x_0$, we have
\begin{equation}
    \nabla\vv=\nabla\left(\dot\alpha_t\x_0+\dot\sigma_t\frac{\x_t-\alpha_t\x_0}{\sigma_t}\right)=\frac{\dot\sigma_t}{\sigma_t}\Iv
\end{equation}
and
\begin{equation}
    \E_{ q_{t0}(\x_0|\x_t)}\left[(\vv^*(\x_t,t)-\vv)\vv^*(\x_t,t)^\top\right]=\E_{ q_{t0}(\x_0|\x_t)}\left[\vv^*(\x_t,t)-\vv\right]\vv^*(\x_t,t)^\top=\vect 0
\end{equation}
So
\begin{equation}
\begin{aligned}
\nabla\vv^*(\x_t,t)&=\nabla\int q_{t0}(\x_0|\x_t)\vv\dm\x_0 \\
&=\int \nabla q_{t0}(\x_0|\x_t)\vv^\top+q_{t0}(\x_0|\x_t)\nabla\vv\dm\x_0\\
&=\int q_{t0}(\x_0|\x_t)\left(\frac{2}{g^2(t)}(\vv^*(\x_t,t)-\vv)\vv^\top+\frac{\dot\sigma_t}{\sigma_t}\Iv\right)\dm\x_0\\
&=\frac{\dot\sigma_t}{\sigma_t}\Iv+\frac{2}{g^2(t)}\E_{ q_{t0}(\x_0|\x_t)}\left[(\vv^*(\x_t,t)-\vv)\vv^\top\right]\\
&=\frac{\dot\sigma_t}{\sigma_t}\Iv-\frac{2}{g^2(t)}\E_{q_{t0}(\x_0|\x_t)}\left[(\vv^*(\x_t,t)-\vv)(\vv^*(\x_t,t)-\vv)^\top\right]
\end{aligned}
\end{equation}
The expression for $\tr(\nabla\vv^*(\x_t,t))$ can be easily derived from the above equation.
\end{proof}
Then we prove Theorem~\ref{thrm:second_trace} as follows.
\begin{proof}
The optimization in Eqn.~\eqref{eqn:dsm-2-trace-obj} can be rewritten as 
\begin{equation}
    \theta^*\!=\argmin_{\theta}\frac{2\sigma_t^2}{g^2(t)}\E_{q_t(\x_t)}\E_{q_{t0}(\x_0|\x_t)}\!\left[\left|\vv_2^{\text{trace}}(\x_t,t;\theta)\!-\!\frac{\dot{\sigma}_t}{\sigma_t}d\!+\!\frac{2}{g^2(t)}\|\hat\vv_1(\x_t,t)-\vv\|_2^2\right|^2\right].
\end{equation}
For fixed $t$ and $\x_t$, minimizing the inner expectation is a minimum mean square error problem for $\vv_2^{\text{trace}}(\x_t,t;\theta)$, so the optimal $\theta^*$ satisfies
\begin{equation}
    \vv_2^{\text{trace}}(\x_t,t;\theta^*)=\frac{\dot{\sigma}_t}{\sigma_t}d-\frac{2}{g^2(t)}\E_{q_{t0}(\x_0|\x_t)}[\|\hat\vv_1(\x_t,t)-\vv\|_2^2]
\end{equation}
Using Lemma~\ref{lemma:trace} and $\vv^*(\x_t,t)=\E_{q_{t0}(\x_0|\x_t)}[\vv]$, we have
\begin{equation}
\begin{aligned}
&\tr(\nabla\vv^*(\x_t,t))-\vv_2^{\text{trace}}(\x_t,t;\theta^*)\\
=&\frac{2}{g^2(t)}\E_{q_{t0}(\x_0|\x_t)}\left[\|\hat\vv_1(\x_t,t)\|_2^2-2\vv^\top\hat\vv_1(\x_t,t)-\|\vv^*(\x_t,t)\|_2^2+2\vv^\top\vv^*(\x_t,t)\right]\\
=&\frac{2}{g^2(t)}\|\hat\vv_1(\x_t,t)-\vv^*(\x_t,t)\|_2^2
\end{aligned}
\end{equation}
Therefore, we can obtain the error bound by
\begin{equation}
\begin{aligned}
\left|\vv_2^{\text{trace}}(\x_t,t;\theta)-\tr(\nabla_{\x}\vv^*(\x_t,t))\right|&\leq \left|\vv_2^{\text{trace}}(\x_t,t;\theta)-\vv_2^{\text{trace}}(\x_t,t;\theta^*)\right|+|\vv_2^{\text{trace}}(\x_t,t;\theta^*)-\tr(\nabla\vv^*(\x_t,t))|\\
&=\left|\vv_2^{\text{trace}}(\x_t,t;\theta)-\vv_2^{\text{trace}}(\x_t,t;\theta^*)\right|+\frac{2}{g^2(t)}\delta_1^2(\x_t,t)
\end{aligned}
\end{equation}
where $\delta_1(\x_t,t)=\|\hat\vv_1(\x_t,t)-\vv^*(\x_t,t)\|_2$ is the first-order estimation error.
\end{proof}

\section{Difference between our second-order flow matching and the previous time score matching in~\citet{choi2022density}}
We propose the error-bounded second-order flow matching objective to regularize $-\tr(\nabla_{\x}\vv_\theta(\x_t,t))$, which is equal to $\frac{\dm\log p_t(\x_t)}{\dm t}$ by the ``Instantaneous Change of Variables'' formula of CNFs~\citep{chen2018neural}. \citet{choi2022density} proposes a joint score matching method to estimate the data score as well as the time score $(\nabla_{\x}\log p_t(\x), \partial_{t}\log p_t(\x))$, which seems related. However, they are essentially different.

Firstly, the change-of-variable for CNFs describes the total derivative of $\log p_t(\x_t)$ w.r.t. $\x_t$ which evolves by the ODE flow trajectory, not each fixed data point $\x\in \R^d$. However, $\partial_{t} \log p_t(\x)$ in~\citet{choi2022density} describes the partial derivative of $\log p_t(\x)$ for $\x\in \R^d$, i.e., any fixed data point in the whole space. Specifically, according to the Fokker-Planck equation, we have

\begin{equation}
\partial_{t}p_t(\x) = -\nabla_{\x}\cdot (p_{t}(\x) \vv_\theta(\x,t))
\end{equation}

It follows that
\begin{equation}
\partial_{t}\log p_t(\x) = -\nabla_{\x}\cdot \vv_\theta(\x,t) - \vv_\theta(\x,t)^\top \nabla_{\x}\log p_t(\x)
\end{equation}

Therefore, the total derivative $\frac{\dm\log p_t(\x_t)}{\dm t}$ we care about is different from the partial derivative $\partial_{t}\log p_t(x)$ in~\citet{choi2022density}, and their training objectives are also different (with different optimal solutions).

Moreover, there is another difference: \citet{choi2022density} trains another model to estimate the partial derivative $\partial_{t}\log p_t(x)$, which is independent of the ODE velocity $\vv_\theta(\x,t)$ (in the form of the parameterized score function $\sv_\theta(\x,t)$). However, our method restricts the parameterized velocity $\vv_\theta(\x,t)$ itself, and does not employ another model.

Finally, the techniques used in~\citet{choi2022density} and our work are also different. \citet{choi2022density} estimates the score matching loss for the partial derivative $\partial_{t} \log p_t(x)$ by the well-known integral-by-parts, which is used to derive the famous \textit{sliced score matching}~\citep{song2020sliced}, to avoid the computation of the score function $\nabla_{\x}\log p_t(\x)$; However, our method leverages the property of mean square error (that its minimum is conditional mean), which is used to derive the famous \textit{denoising score matching}~\citep{vincent2011connection}, to estimate the divergence of $\vv_\theta(\x,t)$. In the score matching literature, sliced score matching and denoising score matching are two rather different techniques. As first-order denoising score matching is widely used in training diffusion models (such as~\citet{song2020score}), our proposed second-order flow matching is also suitable for training diffusion ODEs.

\section{Details of our adaptive IS}
\label{appendix:is}
In this section, we give details of our adaptive IS stated in Section~\ref{sec:IS}. First, we parameterize $\gamma_\eta(t)$ similar to~\citet{kingma2021variational}:
\begin{equation}
    \label{eqn:gamma_t_func}
    \gamma_\eta(t)=\gamma_0+(\gamma_T-\gamma_0)\frac{\tilde{\gamma}_\eta(t)-\tilde{\gamma}_\eta(0)}{\tilde{\gamma}_\eta(1)-\tilde{\gamma}_\eta(0)}
\end{equation}
where $\tilde{\gamma}_\eta(t)$ is a dense monotone increasing network. Concretely, we use a two-layer fully-connected network $\tilde{\gamma}_\eta(t)=l_2(\phi(l_1(t)))$ where $\phi$ is the sigmoid activation function, $l_1,l_2$ are linear layers with positive weight and output units of 1024 and 1.

\begin{algorithm}[ht!]
\caption{Adaptive importance sampling (single iteration)}
\label{algorithm:adapt_is}
\textbf{Require:} velocity network $\vv_\theta$, IS network $\tilde{\gamma}_\eta$, noise schedule $\alpha_\gamma,\sigma_\gamma$, batch size $N$

\quad Sample $\x_0^{(1)},\dots,\x_0^{(N)}$ from data distribution

\quad Sample $\epsilonv^{(1)},\dots,\epsilonv^{(N)}$ from standard Gaussian distribution $\N(\vect{0},\Iv)$

\quad Sample $t^{(1)},\dots,t^{(N)}$ from uniform distribution $\Uc(0,1)$

\quad Caculate $\gamma_\eta(t^{(i)}),i=1,\dots,N$ by Eqn.~\eqref{eqn:gamma_t_func}

\quad Fix $\eta$, optimize $\theta$ to minimize $\frac{1}{N}\sum_{i=1}^N\mathcal{L}_{\theta,\eta}(\x_0^{(i)},\epsilonv^{(i)},t^{(i)})$

\quad Fix $\theta$, optimize $\eta$ to minimize $\frac{1}{N}\sum_{i=1}^N\mathcal{L}^2_{\theta,\eta}(\x_0^{(i)},\epsilonv^{(i)},t^{(i)})$
\end{algorithm}

Then we present our adaptive IS procedure in Algorithm~\ref{algorithm:adapt_is}. \citet{kingma2021variational} proposes to reuse the gradient $\nabla_\theta\mathcal{L}_{\theta,\eta}(\x_0,\epsilonv,t)$ to optimize $\eta$ and avoid a second backpropagation by decomposing the gradient $\nabla_\eta \mathcal{L}_{\theta,\eta}^2(\x_0,\epsilonv,t)$ using chain-rule. We simply their learning of $\tilde{\gamma}_\eta$ by removing the complex gradient operation in one iteration and propose to alternatively optimize $\theta$ and $\eta$. It may take extra overhead, but also seeks the optimal IS and is enough for ablation.

\section{Experiment details}
\label{appendix:details}
In this section, we provide details of our experiment settings. Our network, hyperparameters and training are the same for different noise schedules on the same dataset.
\paragraph{Model architectures} Our diffusion ODEs are parameterized in terms of the $\gamma$-timed normalized velocity predictor $\tilde\vv_\theta(\x_\gamma,\gamma)$, based on the U-Net structure of \citet{kingma2021variational}. This architecture is tailored for maximum likelihood training, employing special designs such as removing the internal downsampling/upsampling and adding Fourier features for fine-scale prediction. Our configuration for each dataset also follows \citet{kingma2021variational}: For CIFAR-10, we use U-Net of depth 32 with 128 channels; for ImageNet-32, we still use U-Net of depth 32, but double the number of channels to 256. All our models use a dropout rate of 0.1 in the intermediate layers. For CIFAR-10 (with data augmentation), we use U-Net of depth 32 with 256 channels and decrease the dropout rate to 0.05.
\paragraph{Hyperparameters and training}
We follow the same default training settings as \citet{kingma2021variational}. For all our experiments, we use the Adam~\citep{kingma2014adam} optimizer with learning rate $2\times 10^{-4}$, exponential decay rates of $\beta_1=0.9,\beta_2=0.99$ and decoupled weight decay~\citep{loshchilov2017decoupled} coefficient of 0.01. We also maintain an exponential moving average (EMA) of model parameters with an EMA rate of 0.9999 for evaluation. 

For other hyperparameters, we use fixed start and end times which satisfy $\gamma_\epsilon=-13.3,\gamma_T=5.0$, which is the default setting in \citet{kingma2021variational}. In the finetuning stage, we simply set the coefficient $\lambda$ in the mixed loss $\Jc_{\FM}(\theta)+\lambda\Jc_{\FM,\tr}(\theta)$ as 0.1 with no further tuning, so that the magnitude of the second-order loss is negligible w.r.t the first-order loss. Since the first-order matching accuracy is critical to the second-order matching, a large $\lambda$ will make the training unstable or even degenerate the likelihood performance.

All our training processes are conducted on 8 GPU cards of NVIDIA A40 except for ImageNet-32 (old version) and CIFAR-10 (with data augmentation). For CIFAR-10, we pretrain the model for 6 million iterations, which takes around 3 weeks. Then we finetune the model for 200k iterations, which takes around 1 day. For ImageNet-32 (new version), we pretrain the model for 2 million iterations, which takes around 2 weeks. Then we finetune the model for 250k iterations, which takes around 3 days. We use a batch size of 128 for both training stages and both datasets.

Note that in related works~\citep{lipman2022flow,albergo2022building}, experiments on ImageNet-32 (new version) are conducted at a larger batch size (512 or 1024), which may improve the results. We did not use a larger batch size or train longer due to resource limitations.

For ImageNet-32 (old version), the training processes are conducted on 8 GPU cards of NVIDIA A100 (40GB). We pretrain the model for 2 million iterations using a batch size of 512, which takes around 2 weeks. Then we finetune the model for 500k iterations using a batch size of 128 and accumulate the gradient for every 4 batches, which takes around 2.5 days.

For CIFAR-10 (with data augmentation), the training processes are conducted on a cluster of 64 GPU cards of NVIDIA A800 (80GB). We pretrain the model for 2 million iterations using a batch size of 1024, which takes around 2 weeks. Due to the large training resource requirements and the regularization effect by data augmentation, we do not further finetune the model by the second-order flow matching loss.
\paragraph{Likelihood and sample quality} For likelihood, we use our truncated-normal dequantization. When the number of importance samples $K=1$, we report the BPD on the test dataset with 5 times repeating to reduce the variance of the trace estimator. When $K=5$ or $K=20$, we do not repeat the dataset since the log-likelihood of a data sample is already evaluated multiple times. For sampling, since we are concentrated on ODE, we simply use an adaptive-step ODE solver with RK45 method~\citep{dormand1980family} (relative tolerance $10^{-5}$ and absolute tolerance $10^{-5}$). We generate 50k samples and report the FIDs on them. Utilizing high-quality sampling procedures such as PC sampler~\citep{song2020score} or fast sampling algorithms such as DPM-Solver~\citep{lu2022dpm} may improve the results, which are left for future work.

\section{Additional samples}
\label{appendix:samples}
\begin{figure}[ht]
	\centering
	\begin{minipage}{.28\linewidth}
		\centering
			\includegraphics[width=.96\linewidth]{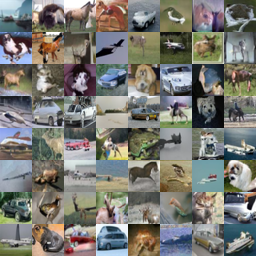}\\
\small{(a) VDM\citep{kingma2021variational}}
	\end{minipage}
	\begin{minipage}{.28\linewidth}
	\centering
	\includegraphics[width=.96\linewidth]{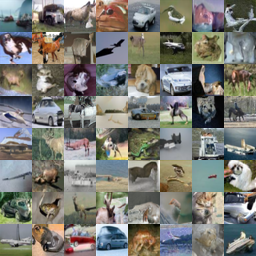}\\
\small{	(b) Our pretrain}
\end{minipage}
	\begin{minipage}{.28\linewidth}
		\centering		
 	\includegraphics[width=.96\linewidth]{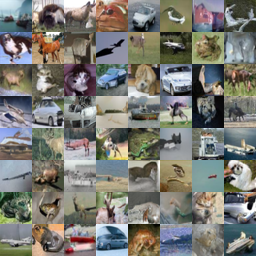}\\
\small{(c) Our pretrain+finetune}
   \end{minipage}
	\caption{Random samples for ablation by ODE sampler. Our pretraining and finetuning lead to a better likelihood with small visual quality degeneration.}
\end{figure}

\begin{figure}[ht]
	\centering		
 	\includegraphics[width=.5\linewidth]{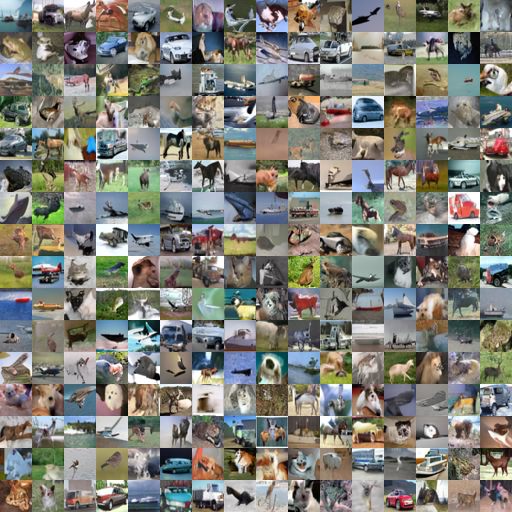}\\
 	\vspace{-.1in}
	\caption{Random samples by ODE sampler (CIFAR-10, SP).}
 	\vspace{-.1in}
\end{figure}

\begin{figure}[ht]
	\centering		
 	\includegraphics[width=.6\linewidth]{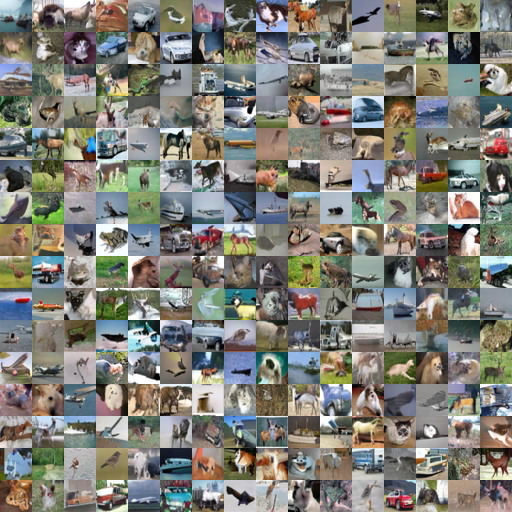}\\
 	\vspace{-.1in}
	\caption{Random samples by ODE sampler (CIFAR-10, VP).}
 	\vspace{-.1in}
\end{figure}

\begin{figure}[ht]
	\centering		
 	\includegraphics[width=.6\linewidth]{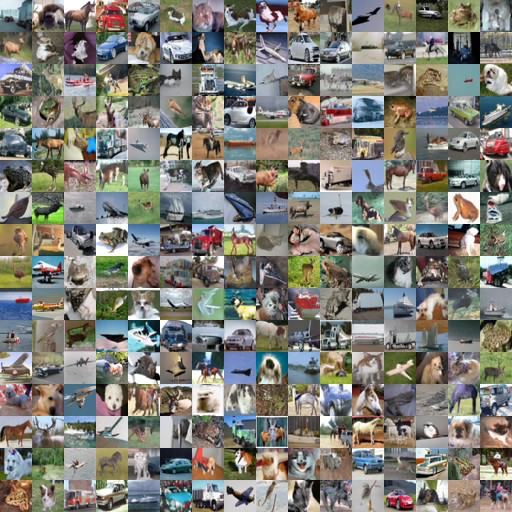}\\
 	\vspace{-.1in}
	\caption{Random samples by ODE sampler (CIFAR-10, VP, with augmentation).}
 	\vspace{-.1in}
\end{figure}

\begin{figure}[ht]
	\centering		
 	\includegraphics[width=.6\linewidth]{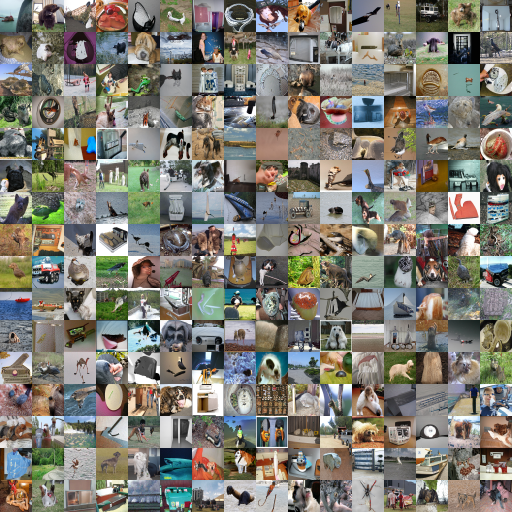}\\
 	\vspace{-.1in}
	\caption{Random samples by ODE sampler (ImageNet-32, VP).}
 	\vspace{-.1in}
\end{figure}
\begin{figure}[ht]
	\centering		
 	\includegraphics[width=.6\linewidth]{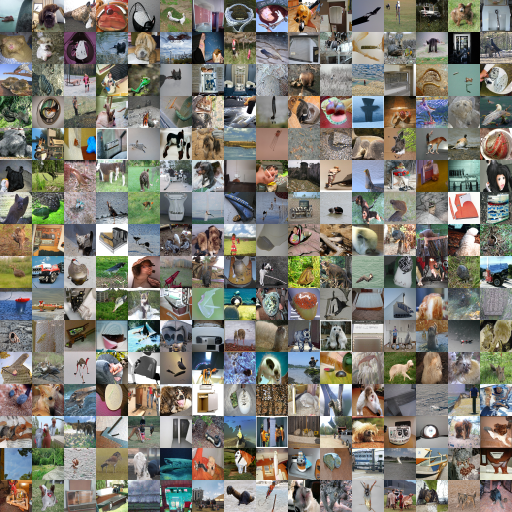}\\
 	\vspace{-.1in}
	\caption{Random samples by ODE sampler (ImageNet-32, SP).}
 	\vspace{-.1in}
\end{figure}


\end{document}